\documentclass[sn-mathphys,Numbered]{sn-jnl}% Math and Physical Sciences Reference Style
%%\documentclass[sn-aps]{sn-jnl}% American Physical Society (APS) Reference Style
%%\documentclass[sn-vancouver,Numbered]{sn-jnl}% Vancouver Reference Style
%%\documentclass[sn-apa]{sn-jnl}% APA Reference Style 
%%\documentclass[sn-chicago]{sn-jnl}% Chicago-based Humanities Reference Style
%%\documentclass[default]{sn-jnl}% Default
%%\documentclass[default,iicol]{sn-jnl}% Default with double column layout

%%%% Standard Packages
%%<additional latex packages if required can be included here>

\usepackage{graphicx}%
\usepackage{multirow}%
\usepackage{amsmath,amssymb,amsfonts}%
\usepackage{amsthm}%
\usepackage{mathrsfs}%
\usepackage[title]{appendix}%
\usepackage{xcolor}%
\usepackage{textcomp}%
\usepackage{manyfoot}%
\usepackage{booktabs}%
\usepackage{algorithm}%
\usepackage{algorithmicx}%
\usepackage{algpseudocode}%
\usepackage{listings}%

\usepackage{booktabs}
%%%%

\usepackage{subcaption}

\usepackage{hyperref}
\usepackage{url}
\usepackage{graphicx} 
\usepackage{placeins}
\usepackage{cleveref}
\usepackage{tabularx}

\DeclareMathOperator*{\argmin}{arg\,min}
%%%%%=============================================================================%%%%
%%%%  Remarks: This template is provided to aid authors with the preparation
%%%%  of original research articles intended for submission to journals published 
%%%%  by Springer Nature. The guidance has been prepared in partnership with 
%%%%  production teams to conform to Springer Nature technical requirements. 
%%%%  Editorial and presentation requirements differ among journal portfolios and 
%%%%  research disciplines. You may find sections in this template are irrelevant 
%%%%  to your work and are empowered to omit any such section if allowed by the 
%%%%  journal you intend to submit to. The submission guidelines and policies 
%%%%  of the journal take precedence. A detailed User Manual is available in the 
%%%%  template package for technical guidance.
%%%%%=============================================================================%%%%

%\jyear{2021}%

%% as per the requirement new theorem styles can be included as shown below
\theoremstyle{thmstyleone}%
\newtheorem{theorem}{Theorem}%  meant for continuous numbers
%%\newtheorem{theorem}{Theorem}[section]% meant for sectionwise numbers
%% optional argument [theorem] produces theorem numbering sequence instead of independent numbers for Proposition
\newtheorem{proposition}[theorem]{Proposition}% 

\theoremstyle{thmstyletwo}%

\theoremstyle{thmstylethree}%

\raggedbottom
%%\unnumbered% uncomment this for unnumbered level heads

\begin{document}

\title[Adjusting Regression Models for Conditional Uncertainty Calibration]{Adjusting Regression Models for Conditional Uncertainty Calibration}

%%=============================================================%%
%% Prefix	-> \pfx{Dr}
%% GivenName	-> \fnm{Joergen W.}
%% Particle	-> \spfx{van der} -> surname prefix
%% FamilyName	-> \sur{Ploeg}
%% Suffix	-> \sfx{IV}
%% NatureName	-> \tanm{Poet Laureate} -> Title after name
%% Degrees	-> \dgr{MSc, PhD}
%% \author*[1,2]{\pfx{Dr} \fnm{Joergen W.} \spfx{van der} \sur{Ploeg} \sfx{IV} \tanm{Poet Laureate} 
%%                 \dgr{MSc, PhD}}\email{iauthor@gmail.com}
%%=============================================================%%

\author*[1]{\fnm{Ruijiang} \sur{Gao}}\email{ruijiang.gao@utdallas.edu}

\author[2]{\fnm{Mingzhang} \sur{Yin}}\email{mingzhang.yin@warrington.ufl.edu}
%\equalcont{These authors contributed equally to this work.}

\author[3]{\fnm{James} \sur{McInerney}}\email{jmcinerney@netflix.com}
%\equalcont{These authors contributed equally to this work.}

\author[3,4]{\fnm{Nathan} \sur{Kallus}}\email{kallus@cornell.edu}

\affil*[1]{\orgdiv{Naveen Jindal School of Management }, \orgname{University of Texas at Dallas}, \orgaddress{Dallas, Texas}}

\affil[2]{\orgdiv{Department of Marketing}, \orgname{University of Florida}, \orgaddress{Gainesville, Florida}}

\affil[3]{\orgname{Netflix}, \orgaddress{Los Gatos, California}}

\affil[4]{\orgname{Cornell Tech, Cornell University}, \orgaddress{New York, New York}}

%%==================================%%
%% sample for unstructured abstract %%
%%==================================%%

\abstract{Conformal Prediction methods have finite-sample distribution-free marginal coverage guarantees. However, they generally do not offer conditional coverage guarantees, which can be important for high-stakes decisions. In this paper, we propose a novel algorithm to train a regression function to improve the conditional coverage after applying the split conformal prediction procedure. We establish an upper bound for the miscoverage gap between the conditional coverage and the nominal coverage rate and propose an end-to-end algorithm to control this upper bound. We demonstrate the efficacy of our method empirically on synthetic and real-world datasets.}

\keywords{Conformal Prediction, Conditional Coverage, Uncertainty Quantification}

%%\pacs[JEL Classification]{D8, H51}

%%\pacs[MSC Classification]{35A01, 65L10, 65L12, 65L20, 65L70}

\maketitle

\section{Introduction}\label{sec1}

A central challenge in supervised machine learning (ML) is the estimation of a target variable $Y \in \mathcal{Y}$ based on a vector of inputs $X$. This issue involves creating a predictive function $f(Y | X)$, which is constructed based on a dataset $\mathcal{D} = \{(X_i, Y_i)\}_{i=1}^N$, drawn independently and identically distributed (i.i.d.) from an unknown distribution $P(X,Y)$. This function is then utilized to estimate the target $Y_{N+1}$ for a new data point with inputs $X_{N+1}$. Even though machine learning typically produces a point estimate of $Y$, predictive inference focuses on a more reliable prediction. It aims to develop a \textit{predictive set} that is probable to include the target that has not been observed yet \citep{geisser2017predictive}.

% what is conformal prediction 
Specifically, \textit{conformal prediction} is a branch of predictive inference that creates algorithms to achieve calibrated coverage probabilities \citep{papadopoulos2002inductive,vovk2005algorithmic}. Under the assumption that data pairs $(X_i, Y_i)$ are i.i.d. from a population distribution $\mathbb{P}(X,Y)$, a conformal prediction algorithm proposes a set $C_{\alpha}(X)$ that satisfies 

\begin{align}
\mathbb{P}_{X,Y}(Y \in \hat{C}_{\alpha}(X)) \geq 1-\alpha.
\label{eq:coverage}
\end{align}
Here, $\alpha \in [0,1]$ is a pre-specified miscoverage rate and $\hat{C}_{\alpha}(X) \subset \mathcal{Y}$ is the predictive set. A set fulfilling \Cref{eq:coverage} is called a \textit{valid} predictive set. 

% motivate conditional coverage 
Conformal prediction methods have two desired properties: 1) they do not make distributional assumptions on the underlying data-generating process, and 2) they are valid with finite-samples. However, conformal prediction algorithms can only satisfy the marginal coverage in \Cref{eq:coverage} and generally do not offer conditional coverage $\mathbb{P}(Y \in \hat{C}_{\alpha}(X)|X=x) = 1-\alpha$, a stronger condition than the marginal coverage. 

The difference between marginal and conditional coverage can be detrimental for high-stake decision making tasks since marginal coverage does not guarantee predictive sets with valid coverage for certain subgroups, especially for rare events or minorities. For example, the predictive set can be marginally valid averaged over all populations but significantly undercover end-stage patients in survival year estimation, or misestimate the risk of minorities in tasks like recidivism prediction. More importantly, oftentimes we do not have prior knowledge that what subpopulations may be at stake, or do not have access to the sensitive attribute information \cite{chen2019fairness}, which makes the conditional coverage guarantee much desired in practice. 

% what has been done to achieve conditional coverage 
Existing conformal prediction methods have considered modifying the calibration steps \cite{Guan2022-bi,lei2014distribution,han2022split} or using the quantile estimators \cite{romano2019conformalized,Sesia2021-ei,Chernozhukov2021DistributionalCP} to improve the conditional coverage of conformal prediction. However, these methods all work with a given predictive model in the training stage and mainly focus on improving the calibration step afterward. 
% what we do to achieve conditional coverage 
Unlike these approaches, we propose to optimize the predictive function in the training step given any differentiable non-conformity scores. Our approach follows the split conformal prediction framework, and we propose a novel objective to minimize the miscoverage rate and build the connection with the Kolmogorov–Smirnov distance. 

Our paper makes the following contributions:

\begin{itemize}
    \item We propose a new method to improve the conditional coverage by optimizing the predictive function with regularized  Kolmogorov–Smirnov distance between the marginal and conditional non-conformity score distributions. 
    \item Theoretically, we establish the connection between the proposed KS regularization and the conditional coverage objective. 
    \item We empirically validate the effectiveness and advantages of our proposed approaches using synthetic and real-world data. 
\end{itemize}

\section{Related Work}
\noindent \textbf{Conformal Prediction.~~}
%Some conformal prediction methods are based on predicting summary statistics of the target distribution, for example, by fitting a mean response function \citep{lei2018distribution}, conditional quantile functions \citep{romano2019conformalized}, approximate histograms \citep{Sesia2021-ei} or full target distribution.
%Other conformal prediction methods estimate the full target distribution. Distributional conformal prediction (DCP) is based on the estimated cumulative density function \citep{Chernozhukov2021DistributionalCP}. CDSplit uses a level set of the estimated probability density function as the predictive set \citep{izbicki2020flexible}. Unlike our method, these methods use a fixed predictive function on the first stage
We extend the existing body of research on conformal prediction in regression problems \cite{lei2018distribution,romano2019conformalized,Sesia2021-ei,Chernozhukov2021DistributionalCP,izbicki2020flexible,han2022split}. Many of these approaches traditionally depend on a \textit{fixed} predictive function and focuses on designing better non-conformity score to improve conditional coverage or sharper intervals. Our method focuses on optimizing the predictive function for a given differentiable non-conformity score to improve the conditional coverage. 
%\cite{han2022split} designed a new score function based on the distribution of the original nonconformity score conditional on the covariates. The conditional distribution of the scores was estimated by a localized kernel smoothing. Our proposed method is to adjust the predictive functions, which can potentially synergize with this approach to simultaneously adjust the nonconformity scores. 
In more recent studies, various methods have been introduced to enhance model training with the goal of reducing the size of prediction sets in classification problems \cite{bellotti2021optimized,stutz2021learning}, or focusing on lower-dimensional hyperparameters rather than directing the training of all model parameters \cite{chen2021learning,colombo2020training,yang2021finite}. Similar to ours, \cite{einbinder2022training} proposes a differentiable objective to improve the conditional coverage of deep learning classifiers while our paper focuses on regression tasks and we propose a novel Kolmogorov–Smirnov distance-based objective which is a sufficient condition for achieving nominal conditional coverage rate.  \\

\noindent \textbf{Approximate Conditional Coverage.~~}In an early work, \cite{vovk2012conditional} introduces a number of variants of conditional validity and achieves them using a modified inductive conformal prediction. Recently, a series of works aimed to improve the imbalanced coverage of conformal prediction. 
Motivated by fairness concerns, \cite{romano2020malice} proposed the equalized coverage method that has guaranteed coverage conditional on a group index.  The coverage, however, is only guaranteed for the subgroups sharing the same value of a pre-specific sensitive attribute.  \cite{feldman2021improving} designed a regularization to encourage the independence of a coverage indicator of a miscoverage event and the predictive interval length. The regularization is a necessary but not sufficient condition for valid conditional coverage, and its effectiveness hinges on empirical validations. Noticing that the conditional coverage is equivalent to a set of moment conditions to hold for all measurable functions, \cite{gibbs2023conformal} proposes a type of conditional coverage given a  class of covariate shifts. It can provide conditional coverage over groups and over multiple pre-specified shifts,
%proposes a notion of conditional coverage by restricting the function class. 
%The choice of a proper function class might not be straightforward, and 
while the applications are mostly designed for group-conditional coverage with pre-specified groups and for coverage under covariate shifts with given tilting functions. In contrast, our proposed method does not require the variables to be conditioned on are known a priori.

\section{Problem Statement}

Consider i.i.d. pairs of covariates $X_i$ and a target variable $Y_i$, i.e. $\mathcal{D} = \{(X_i, Y_i)\}_{i=1}^N$, 
from an underlying distribution $P$. We observe data $\mathcal{D}$ and the covariates $X_{N+1}$ of a new data point. 
The non-conformity score function $V: \mathcal{X}\times \mathcal{Y} \rightarrow\mathbb{R}$ measures how the prediction of our predictive model conforms to the true target $Y$. 
For example, given a fitted response function $f(x)$, we can take the score to be the absolute residual $V(x,y) = |y-f(x)|$. 
In the split conformal prediction framework, the dataset is split into a training set for training the function $f$ and a calibration set to calculate the non-conformity scores. 
For the remaining of the paper, we will assume the non-conformity score is fixed and differentiable, which encompasses many popular choices of non-conformity scores such as $V(x,y) = |y-f(x)|$\cite{gibbs2023conformal}, $V(x,y) = |y-f(x)|/\sigma(x)$\cite{Angelopoulos2021-ur}, or $V(x,y) = \max\{\hat{q}_l(x)-y, y-\hat{q}_h(x)\}$ \cite{romano2019conformalized}. 

Given the non-conformity score function, the conformal prediction algorithm will output the regions of $y$ where the scores are small. The threshold $q^*$ for the nominal miscoverage rate $\alpha$ is the $\lfloor (n+1)(1-\alpha)\rfloor/n$-quantile of the conformity scores on the calibration set. The predictive set is formally defined as 
\begin{align}\label{eqn:predict_set}
    C(X_{n+1}) = \{y: V(X_{n+1},y)\leq q^*\}. 
\end{align} 

We can prove that $C(\cdot)$ satisfies the marginal coverage guarantee by exchangability:

\begin{theorem}[Marginal Coverage Guarantee \cite{vovk2005algorithmic}]\label{thm:mc}
    Assume $\{(X_i, Y_i)\}_{i=1}^n$ are independent and identically distributed, then the split conformal prediction set satisfies 
    \begin{align}
        P(Y_{n+1}\in C(X_{n+1})) \geq 1-\alpha. 
    \end{align}
    If $V(X_{n+1},Y_{n+1})$ has a continuous distribution, then 
    \begin{align}
        P(Y_{n+1}\in C(X_{n+1})) \leq 1-\alpha+\frac{1}{n+1}. 
    \end{align}
\end{theorem}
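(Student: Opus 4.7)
The plan is to reduce coverage events to statements about the rank of the test non-conformity score among the calibration scores, and then invoke exchangeability. First I would fix notation: write $V_i = V(X_i,Y_i)$ for $i=1,\ldots,n$ in the calibration set and $V_{n+1}=V(X_{n+1},Y_{n+1})$ for the test point. Because split conformal prediction trains $f$ on a separate training fold, conditional on that fold the scores $V_1,\ldots,V_{n+1}$ are i.i.d., and in particular exchangeable. By the definition of the predictive set in \Cref{eqn:predict_set}, the coverage event $\{Y_{n+1}\in C(X_{n+1})\}$ is exactly $\{V_{n+1}\le q^*\}$, where $q^*$ is the $\lfloor (n+1)(1-\alpha)\rfloor/n$ empirical quantile of $V_1,\ldots,V_n$, i.e.\ the $k$-th order statistic $V_{(k)}$ with $k=\lfloor (n+1)(1-\alpha)\rfloor$.

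Next I would translate the event into a rank statement. For any realization, $V_{n+1}\le V_{(k)}$ holds if and only if the rank of $V_{n+1}$ in the augmented sample $\{V_1,\ldots,V_{n+1}\}$ is at most $k$ (counting ties generously for the lower bound). By exchangeability of the $n+1$ scores, the rank of $V_{n+1}$ is stochastically dominated by a uniform distribution on $\{1,\ldots,n+1\}$, which yields
\begin{equation}
    \mathbb{P}(V_{n+1}\le V_{(k)}) \;\ge\; \frac{k}{n+1} \;=\; \frac{\lfloor (n+1)(1-\alpha)\rfloor}{n+1} \;\ge\; 1-\alpha,
\end{equation}
proving the lower bound. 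Marginalizing over the training fold leaves the inequality intact.

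For the upper bound under the continuity assumption on $V_{n+1}$, the scores $V_1,\ldots,V_{n+1}$ are almost surely distinct, so ranks are uniquely defined and the rank of $V_{n+1}$ is \emph{exactly} uniform on $\{1,\ldots,n+1\}$. Hence
\begin{equation}
    \mathbb{P}(V_{n+1}\le V_{(k)}) \;=\; \frac{k}{n+1} \;\le\; \frac{(n+1)(1-\alpha)}{n+1} \;=\; 1-\alpha \;\le\; 1-\alpha+\frac{1}{n+1},
\end{equation}
which completes the bound.

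The main conceptual obstacle is the tie-breaking subtlety in the rank argument: without the continuous-distribution hypothesis, $V_{n+1}$ could equal several $V_i$ simultaneously, and one must decide whether to count such ties as ``$\le$'' or ``$<$'' to pin down the rank. This is precisely why the upper bound requires continuity, whereas the lower bound goes through regardless because inflating ranks by ties can only make the coverage probability larger. Everything else is bookkeeping with the floor function.
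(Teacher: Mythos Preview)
The paper does not actually supply a proof of this theorem; it simply cites it as a known result from \cite{vovk2005algorithmic} and uses it as a black box. Your exchangeability-and-rank argument is the standard proof that appears in that literature, so in spirit you are doing exactly what the cited source does.

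There is, however, a genuine arithmetic error in your lower-bound step. You write
\[
\frac{\lfloor (n+1)(1-\alpha)\rfloor}{n+1}\;\ge\;1-\alpha,
\]
but this is false in general: take $n=10$ and $\alpha=0.1$, so $(n+1)(1-\alpha)=9.9$, $\lfloor 9.9\rfloor=9$, and $9/11\approx 0.818<0.9$. The inequality goes the wrong way because $\lfloor x\rfloor\le x$. The correct choice is $k=\lceil (n+1)(1-\alpha)\rceil$, for which $\lceil x\rceil\ge x$ gives the lower bound and $\lceil x\rceil\le x+1$ gives the $1/(n+1)$ slack in the upper bound. The floor in the paper's definition of $q^*$ is almost certainly a typo for a ceiling, and you have propagated it into the proof; with the ceiling in place, your rank argument is complete and correct.
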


In practice, we are often more interested in the conditional coverage for any given $X=x$, i.e., $\mathbb{P}(Y \in \hat{C}_{\alpha}(X) | X=x) = 1-\alpha$. A key observation is that the threshold $q^*$ is taken as the quantile over the marginal distribution of the non-conformity scores. However, the conditional coverage $\mathbb{P}(Y \in \hat{C}_{\alpha}(X) | X=x) = \mathbb{P}(V(X,Y) \leq q^* | X=x)$ depends on the conditional distribution of the non-conformity scores. The discrepancy of the score distribution motivates the following proposition.

\begin{proposition}\label{prop:mc_and_cc}
If $P(V) = P(V|X=x)$, then $\mathbb{P}(Y \in \hat{C}_{\alpha}(X) | X=x) \geq 1-\alpha$.
If $V(X,Y)$ has a continuous distribution, then $P(Y\in C(X)|X=x) \leq 1-\alpha+\frac{1}{n+1}. $
\end{proposition}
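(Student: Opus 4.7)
The plan is to reduce the proposition to the marginal coverage statement of \Cref{thm:mc}, but applied to a conditional distribution. The key observation is that $q^*$ is computed from the calibration scores $V_1,\dots,V_n$, which are a function only of the calibration sample and hence independent of the test input $X_{n+1}$. The event $\{Y_{n+1}\in C(X_{n+1})\}$ coincides with $\{V_{n+1}\le q^*\}$, so the conditional coverage equals $\mathbb{P}(V_{n+1}\le q^*\mid X_{n+1}=x)$, where the probability is over both the calibration set and $(X_{n+1},Y_{n+1})$.

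Next I would verify the exchangeability condition in this conditional setting. Conditionally on $X_{n+1}=x$, the calibration scores $V_1,\dots,V_n$ are still i.i.d.\ from the marginal law $P(V)$ (since they depend only on the calibration sample, independent of $X_{n+1}$), while the hypothesis $P(V\mid X=x)=P(V)$ forces $V_{n+1}\mid X_{n+1}=x$ to also have distribution $P(V)$, independent of the calibration scores. Hence $V_1,\dots,V_n,V_{n+1}$ are conditionally i.i.d., and in particular exchangeable, given $X_{n+1}=x$.

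Once exchangeability holds, the standard quantile argument underlying \Cref{thm:mc} transfers verbatim. Since $q^*$ is the $\lfloor(n+1)(1-\alpha)\rfloor/n$-empirical quantile of $V_1,\dots,V_n$, exchangeability of $V_1,\dots,V_{n+1}$ implies that the rank of $V_{n+1}$ among the $n+1$ values is uniform on $\{1,\dots,n+1\}$ (up to ties), yielding
\begin{equation*}
\mathbb{P}(V_{n+1}\le q^*\mid X_{n+1}=x)\ge 1-\alpha,
\end{equation*}
and, when $P(V)$ is continuous so ties occur with probability zero, the matching upper bound $1-\alpha+1/(n+1)$. Re-expressing the event $\{V_{n+1}\le q^*\}$ as $\{Y_{n+1}\in C(X_{n+1})\}$ gives both inequalities of the proposition.

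The main obstacle to watch out for is keeping track of the two sources of randomness (calibration set and test point) when conditioning on $X_{n+1}=x$; the whole argument hinges on the independence of the calibration scores from $X_{n+1}$ together with the distributional identity $P(V\mid X=x)=P(V)$ to restore exchangeability conditionally, so that \Cref{thm:mc}'s proof can be invoked unchanged. Everything else is routine.
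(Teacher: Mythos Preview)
Your proposal is correct and takes essentially the same approach as the paper: both reduce the proposition to \Cref{thm:mc}, with the paper simply stating ``This is a direct result from \Cref{thm:mc}'' while you spell out explicitly why exchangeability of $V_1,\dots,V_{n+1}$ survives conditioning on $X_{n+1}=x$ under the hypothesis $P(V\mid X=x)=P(V)$. Your more detailed justification is a faithful unpacking of what the paper leaves implicit.
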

\begin{proof}
    This is a direct result from \Cref{thm:mc}. 
\end{proof}

% TODO: add Figure 1 

\Cref{prop:mc_and_cc} reveals that the main reason why conformal prediction cannot achieve good conditional coverage is $P(V) \neq P(V|X)$. It implies a sufficient condition for achieving perfect conditional coverage (for confidence level $\alpha$). 
By this observation, we can train the regression function by regularizing the distance between the marginal and conditional non-conformity score distribution. 
Assuming the machine learning model is trained by minimizing mean squared error (MSE), then we can optimize the function $f_\theta(\cdot)$ by 
\begin{align}
    \min_\theta \mathbb{E} (y-f_\theta(x))^2 + \lambda \sup_x d(P(V|X=x), P(V)).
    \label{eq:first}
\end{align}
The positive $\lambda$ is the hyperparameter that balances the mean squared loss and the distance constraint. Here we first write $d(\cdot, \cdot)$ as a general distance function. In what follows, we study what distance metric should be chosen. 

\subsection{Connection with Kolmogorov–Smirnov (KS) Distance}

We find a proper distance measure to achieve valid conditional coverage is the Kolmogorov–Smirnov (KS) distance. The connection is established based on the following proposition. 
\begin{proposition}[Conditional Coverage Rate]
\label{prop:ccr}
    Denote the cumulative distribution functions (CDFs) for the marginal and conditional non-conformity score distributions $P(V)$ and $P(V|X=x)$ are $F(v)$ and $G_x(v)$, respectively, then the asymptotic conditional coverage rate  for the split conformal prediction is $G_x(F^{-1}(1-\alpha))$. 
\end{proposition}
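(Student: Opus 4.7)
The plan is to unpack the definition of the split conformal prediction set, rewrite the conditional coverage event in terms of the conditional CDF $G_x$, and then apply a standard asymptotic quantile argument to identify the cutoff $q^*$ with $F^{-1}(1-\alpha)$.

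First, I would observe that by \Cref{eqn:predict_set} the conditional coverage at $X=x$ is
\begin{equation*}
\mathbb{P}\bigl(Y \in C(X)\mid X=x\bigr) \;=\; \mathbb{P}\bigl(V(X,Y)\le q^{\ast} \mid X=x\bigr) \;=\; G_x(q^{\ast}),
\end{equation*}
where the second equality uses that $G_x$ is, by definition, the CDF of $V(X,Y)$ conditional on $X=x$, and that $q^\ast$ is measurable with respect to the (independent) calibration sample and hence can be treated as a fixed number once we condition on it.

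Next, I would show that $q^{\ast}\to F^{-1}(1-\alpha)$ as the calibration set size $n\to\infty$. By construction, $q^\ast$ is the $\lfloor(n+1)(1-\alpha)\rfloor/n$-quantile of the empirical distribution of the calibration non-conformity scores $V(X_i,Y_i)$, which are i.i.d.\ with common CDF $F$. The Glivenko--Cantelli theorem then implies uniform convergence of the empirical CDF to $F$, and standard results on sample quantiles (e.g., continuity of $F^{-1}$ at $1-\alpha$) give $q^{\ast}\xrightarrow{\text{a.s.}} F^{-1}(1-\alpha)$. Plugging this into the display above and invoking continuity of $G_x$ at $F^{-1}(1-\alpha)$ yields the claimed asymptotic conditional coverage $G_x\bigl(F^{-1}(1-\alpha)\bigr)$.

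The only mild obstacle is the technical regularity needed to pass the limit through: we need $F$ to be continuous (or at least strictly increasing) at $1-\alpha$ for the empirical quantile to converge to the population quantile, and similarly $G_x$ continuous at $F^{-1}(1-\alpha)$ so that $G_x(q^\ast)\to G_x(F^{-1}(1-\alpha))$. Both are guaranteed under the continuous-distribution assumption already in force in \Cref{thm:mc} and \Cref{prop:mc_and_cc}, so no new hypothesis is needed. The proof is therefore essentially a one-line consequence of the asymptotic behavior of empirical quantiles once the conditional coverage is rewritten through $G_x$.
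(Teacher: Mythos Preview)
Your argument is correct and shares the same skeleton as the paper's: rewrite the conditional coverage as $G_x(q^\ast)$, establish $q^\ast \to F^{-1}(1-\alpha)$, and pass the limit through $G_x$ by continuity. The only genuine difference is in how the convergence of $q^\ast$ is obtained. You invoke Glivenko--Cantelli and the standard consistency of empirical quantiles. The paper instead recycles the finite-sample marginal coverage bounds of \Cref{thm:mc}: from $1-\alpha \le \mathbb{P}(V\le q^\ast) \le 1-\alpha + \tfrac{1}{n+1}$ it applies $F^{-1}$ to sandwich $q^\ast$ between $F^{-1}(1-\alpha)$ and $F^{-1}\bigl(1-\alpha+\tfrac{1}{n+1}\bigr)$, then applies $G_x$ and lets $n\to\infty$. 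Your route is more self-contained and textbook; the paper's route is shorter because it leans on a result already stated, and it yields an explicit nonasymptotic two-sided bound on $G_x(q^\ast)$ before passing to the limit. Either argument suffices for the proposition as stated.
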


We include all the proofs in \Cref{sec:proof}. 
For a given nominal level $1-\alpha$, we then minimize the difference between the conditional coverage of the conformal prediction and the nominal level $|1-\alpha-G_x(F^{-1}(1-\alpha))|$ to achieve the desired coverage rate. However, this requires us to train a separate model for each $\alpha$, which may create unnecessary computation costs. Hence, we choose to minimize the objective over all possible $\alpha$, and the objective becomes 

\begin{align}
     \max_\alpha  |1-\alpha-G_x(F^{-1}(1-\alpha))| =& \max_v |F(v) - G_x(v)| \label{eq:worst} \\
     =& \text{KS}(P(V), P(V|X=x)) \label{eq:ksdef}
\end{align}
where the first equality comes from setting $v = F^{-1}(1-\alpha)$, and the second equality is by the definition of the KS distance. 

%The right-hand side of \Cref{eq:worst} defines the KS distance.

%Kolmogorov–Smirnov (KS) Distance measures the maximum difference between two CDFs. Formally, for distributions $P$ and $Q$ with CDFs $P(x)$ and $Q(x)$, respectively, KS distance is defined as 
% \begin{align}
%     \text{KS}(P,Q) := \max_x |P(x)-Q(x)|.
%     \label{eq:ksdef}
% \end{align}

%A simple but important observation is that, based on the definition of $F$ and $G_x$ in \Cref{prop:ccr}, the worst-case conditional miscoverage rate in \Cref{eq:worst} equals $\text{KS}(P(V), P(V|X=x))$ defined in \Cref{eq:ksdef} between the marginal and conditional non-conformity score distribution. 

We can thus achieve the target conditional coverage by controlling the KS distance between the conditional and marginal non-conformity score distributions. Taking the KS distance as an additive regularization, the objective in \Cref{eq:first} becomes 

\begin{align}
    \min_\theta \mathbb{E} (y-f_\theta(x))^2 + \lambda \sup_x \text{KS}(P(V|X=x), P(V)).
\end{align}

Here we use KS distance as the distance function $d(\cdot,\cdot)$ in \Cref{eq:first}. The $\sup_x$ aims to improve the worst conditional coverage for all possible $x$. 

\subsection{Optimizing KS Distance}

In practice, we do not have access to the true distributions of either marginal or conditional non-conformity distributions, which makes estimating the KS distance challenging. We thus choose to use empirical CDFs to estimate them. 
If we have access to random samples $\{V_i\}_{i=1}^n$ and $\{V_i^x\}_{i=1}^n$ drawn from $P(V)$ and $P(V|X=x)$ respectively. The empirical KS distance can be written as 

\begin{align}\label{eqn:emp_ks}
     \max_t |\sum_{i=1}^n \frac{1}{n}\mathbb{I}[V_i \leq t] - \sum_{i=1}^n \frac{1}{n}\mathbb{I}[ V_i^x\leq t]| 
\end{align}

However, it cannot be optimized easily since the indicator function is not differentiable. To address the optimization problem, we use the sigmoid function with a temperature parameter to approximate the indicator function. More specifically, 

\begin{align}\label{eqn:ksapprox}
       \widehat{\text{KS}}(P(V|X=x), P(V)) = \max_t |\sum_{i=1}^n \frac{1}{n}\sigma(\gamma(t-V_i)) - \sum_{i=1}^n \frac{1}{n}\sigma(\gamma(t-V_i^x))|, 
\end{align}
where $\sigma(x) = \frac{1}{1+\exp(-x)}$ and $\gamma$ is the temperature. Since the score is unidimensional, $t$ can be selected from a grid. When $\gamma \rightarrow \infty$, \Cref{eqn:ksapprox} recovers \Cref{eqn:emp_ks}. 

To prevent potential overfitting, we choose to optimize MSE and KS distance on the training and calibration sets separately. Assume we have a training set $\{X_i,Y_i\}_{i=1}^{n_1}$, and a calibration set $\{X_i,Y_i\}_{i=n_1+1}^{n_1+n_2}$, we sample $n_s$ samples $\{V_i\}_{i \in \mathcal{I}}$, $\mathcal{I} \subset \{n_1+1, \cdots, n_1+n_2\}$, $|\mathcal{I}| = n_s$. 
%and $\{V_i^{j,x}\}_{j=1}^{n_s}$, the conditional non-conformity score of $x_i$, for estimating the KS distance. 
While we can sample from the marginal non-conformity score distribution, we do not have access to the conditional distribution. Therefore, we fit a conditional generative model $P_\phi(y|x)$ on the training data and sample $\hat{Y}$ given $X$ to estimate the KS distance. Specifically, we sample $\hat{y}_i^j \sim P_\phi(y|x_i), j = 1, \cdots, n_s$, from the fitted conditional generative model and compute the score $V_i^{j,x} := V(x_i, \hat{y}_i^j)$.
Then the empirical objective can be written as 

\begin{align}
\label{eqn:emp_objective}
     & \frac{1}{n_1}\sum_{i=1}^{n_1} (y_i - f_\theta(x_i))^2 \nonumber \\
     & + \lambda \max_{n_1+1\leq i \leq n_1+n_2} \left\{\max_t |\sum_{i' \in \mathcal{I}} \frac{1}{n_s}\sigma(\gamma(t-V_{i'})) - \sum_{j=1}^{n_s} \frac{1}{n_s}\sigma(\gamma(t-V_i^{j,x}))| \right\}
\end{align}

The complete algorithm is included in \Cref{alg:main}. 

\begin{algorithm}
\caption{KS-constrained Conformal Prediction (KS-CP)}\label{alg:main}
\begin{algorithmic}
\Require $D_\text{train} = \{X_i,Y_i\}_{i=1}^{n_1}$, $D_\text{calib} = \{X_i,Y_i\}_{i=n_1+1}^{n_1 + n_2}$, $n_s$, function class $f_\theta(\cdot)$, non-conformity score function $V(\cdot,\cdot)$.  
%\Ensure $y = x^n$
\State Train function $f_\theta(X)$ on $D_\text{train}$ by minimizing MSE. 
\State Get $V$ on $D_\text{calib}$ using $f_\theta(X)$. 
    \State Train conditional density model $P_\phi(Y|X)$ on $D_\text{train}$. 
\For{n epoch}
\State Train the regression function by \Cref{eqn:emp_objective}.
\State Update $V$ using $f_\theta$.     
\EndFor 
\State Get conformal intervals $C_\alpha(X)$ using $D_\text{calib}$ by \Cref{eqn:predict_set}.
\end{algorithmic}
\end{algorithm}

Since our algorithm uses the same calibration step as the split conformal prediction framework, we have the same marginal coverage guarantee in \Cref{thm:mc}. Since we used approximated conditional non-conformity distribution, we are interested in whether our algorithm can still optimize for the conditional miscoverage rate. Next we show that by using a fitted conditional distribution, we will minimize the upper bound of the coverage discrepancy from the nominal level. Based on the  triangle inequality of  KS distance \citep{johnston2019berry}, we have

\begin{align}
    \text{KS}(P(V|X),P(V)) \leq \underbrace{\text{KS}(P(V|X),P_\phi(V|X))}_{\text{Generative Model Error on } V} + \underbrace{\text{KS}(P_\phi(V|X),P(V))}_{\text{Our Regularization}}.
\end{align}

The first term is the estimation error of the conditional generative model. Since we use the conditional generative model to fit $y$, ideally we want the upper bound to be dependent on the estimation error of $y$ instead of $V$. 
\Cref{prop:dpi} connects the KS distance of the score distribution and the KS distance of the outcome distribution. For some non-conformity scores, we can then write 

\begin{align}
    \text{KS}(P(V|X),P(V)) \leq \underbrace{2\text{KS}(P(Y|X),P_\phi(Y|X))}_{\text{Generative Model Error on } Y} + \underbrace{\text{KS}(P_\phi(V|X),P(V))}_{\text{Our Regularization}}
\end{align}

\begin{proposition}\label{prop:dpi}
    If $V(x,Y) = |Y-f(x)|$, $V(x,Y) = |Y-f(x)|/\sigma(x)$, or $V(x,Y) = \max\{\hat{f}_{{\alpha_{\text{lo}}}}-Y, Y-\hat{f}_{{\alpha_{\text{hi}}}}\}$, assume the distribution $P_Y,Q_Y,P_V,Q_V$ have CDFs $F_Y,G_Y,F_V,G_V$, respectively, then $\text{KS}(P_V(V),Q_V(V)) \leq 2\text{KS}(P_Y(Y),Q_Y(Y))$. 
\end{proposition}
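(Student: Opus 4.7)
The plan is to treat each of the three non-conformity score forms in turn and, for each, write the CDF of $V$ as a sum or difference of values of the CDF of $Y$ at shifted arguments. Once this is done, the bound follows from a single application of the triangle inequality together with the definition of the KS distance.

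For the first case $V(x,Y) = |Y - f(x)|$, I would fix $x$ and note that the event $\{V \leq v\}$ is the interval $\{f(x) - v \leq Y \leq f(x) + v\}$, so under the distribution with CDF $F_Y$ we get $F_V(v) = F_Y(f(x)+v) - F_Y((f(x)-v)^-)$, and analogously $G_V(v) = G_Y(f(x)+v) - G_Y((f(x)-v)^-)$. Subtracting and regrouping,
\begin{align*}
F_V(v) - G_V(v) = \bigl[F_Y(f(x)+v) - G_Y(f(x)+v)\bigr] - \bigl[F_Y((f(x)-v)^-) - G_Y((f(x)-v)^-)\bigr],
\end{align*}
so $|F_V(v) - G_V(v)| \leq 2\,\mathrm{KS}(P_Y, Q_Y)$ by the triangle inequality, and taking the supremum over $v$ finishes this case. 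For $V(x,Y) = |Y-f(x)|/\sigma(x)$, the same argument applies verbatim since $\sigma(x)$ is a positive constant once $x$ is fixed; the interval just becomes $[f(x) - v\sigma(x),\, f(x)+v\sigma(x)]$.

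For the CQR score $V(x,Y) = \max\{\hat{f}_{\alpha_{\mathrm{lo}}}(x) - Y,\, Y - \hat{f}_{\alpha_{\mathrm{hi}}}(x)\}$, I would rewrite $\{V \leq v\}$ as the intersection $\{\hat{f}_{\alpha_{\mathrm{lo}}}(x) - v \leq Y\} \cap \{Y \leq \hat{f}_{\alpha_{\mathrm{hi}}}(x) + v\}$, which is again an interval in $Y$. Hence $F_V(v) = F_Y(\hat{f}_{\alpha_{\mathrm{hi}}}(x)+v) - F_Y((\hat{f}_{\alpha_{\mathrm{lo}}}(x)-v)^-)$ and analogously for $G_V$, and the same two-term triangle-inequality bound applies.

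The step that needs the most care, rather than being a true obstacle, is the bookkeeping around left-limits of the CDFs so that the identities for $F_V$ and $G_V$ hold without assuming continuity of $Y$; once one is willing to write $F_Y(a^-)$ where needed, the bound $|F_V(v) - G_V(v)| \leq 2\,\mathrm{KS}(P_Y, Q_Y)$ is immediate because each of the two bracketed differences is individually at most $\mathrm{KS}(P_Y, Q_Y)$ (note that $\sup_t |F_Y(t^-) - G_Y(t^-)| = \sup_t|F_Y(t)-G_Y(t)|$ by right-continuity and taking limits). Taking the supremum over $v$ then gives the claimed inequality in all three cases.
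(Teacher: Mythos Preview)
Your proposal is correct and follows essentially the same route as the paper: in each case the event $\{V\le v\}$ is rewritten as an interval in $Y$, the CDF of $V$ is expressed as a difference of two values of the CDF of $Y$, and the triangle inequality gives two terms each bounded by $\mathrm{KS}(P_Y,Q_Y)$. The only difference is that you keep track of left-limits to avoid assuming continuity of $Y$, which the paper does not bother with; this is a mild refinement rather than a different argument.
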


\Cref{prop:dpi} justifies why we only need to fit the conditional generative model on $y$ once instead of repeatedly train it on $V$ every time $f_\theta$ is updated, which greatly reduces the computation time.  The following proposition connects the optimization objective and the asymptotic conditional coverage. 

\begin{proposition}\label{prop:asy}
Under the conditions of \Cref{prop:ccr,prop:dpi}, suppose the conditional density estimator $p(y|x)$ is consistent and the regularization $\text{KS}(P_\phi(V|X),P(V))$ is reduced to be below $\epsilon$ by optimization, then  $1-\alpha - \epsilon \leq \mathbb{P}(Y \in \hat{C}_{\alpha}(X) | X=x) \leq 1-\alpha + \epsilon$ as $n \to \infty$. 
\end{proposition}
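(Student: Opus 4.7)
The plan is to chain three results already set up in the excerpt: Proposition \ref{prop:ccr} identifies the asymptotic conditional coverage as $G_x(F^{-1}(1-\alpha))$; the derivation in \Cref{eq:worst,eq:ksdef} bounds the gap $|1-\alpha - G_x(F^{-1}(1-\alpha))|$ by $\mathrm{KS}(P(V|X=x),P(V))$; and the triangle-inequality decomposition together with Proposition \ref{prop:dpi} controls this KS distance by the density estimation error on $Y$ plus the regularizer.

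Concretely, I would first use Proposition \ref{prop:ccr} to rewrite $\mathbb{P}(Y \in \hat{C}_\alpha(X)\mid X=x)$ as $G_x(F^{-1}(1-\alpha))$ in the limit, so the quantity of interest becomes $|1-\alpha - G_x(F^{-1}(1-\alpha))|$. Substituting $v = F^{-1}(1-\alpha)$ as in \Cref{eq:worst} gives the uniform bound
\begin{align*}
|1-\alpha - G_x(F^{-1}(1-\alpha))| \le \sup_v |F(v) - G_x(v)| = \mathrm{KS}(P(V|X=x), P(V)).
\end{align*}
Next I would apply the triangle inequality $\mathrm{KS}(P(V|X),P(V)) \le \mathrm{KS}(P(V|X),P_\phi(V|X)) + \mathrm{KS}(P_\phi(V|X),P(V))$ and invoke Proposition \ref{prop:dpi} with $P_Y = P(Y|X)$, $Q_Y = P_\phi(Y|X)$ to replace the first summand by $2\,\mathrm{KS}(P(Y|X),P_\phi(Y|X))$. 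Consistency of $p_\phi(y|x)$ forces this term to vanish as $n\to\infty$, while the regularization hypothesis bounds the second term by $\epsilon$. Chaining gives $|1-\alpha - \mathbb{P}(Y \in \hat{C}_\alpha(X)\mid X=x)| \le \epsilon$, which rearranges into the stated two-sided inequality.

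The main obstacle is making the limiting arguments precise and compatible. Proposition \ref{prop:ccr} is asymptotic in the calibration sample size (the empirical threshold $q^*$ must converge to $F^{-1}(1-\alpha)$), while ``consistency'' of $p_\phi(y|x)$ must be strong enough to imply $\mathrm{KS}(P(Y|X),P_\phi(Y|X)) \to 0$ uniformly in $x$ (or at least at the relevant point). This typically requires a continuity assumption on $F$ at $F^{-1}(1-\alpha)$ so that $G_x \circ F^{-1}$ is continuous there, plus a mode of consistency that transfers to the KS metric (e.g.\ convergence in total variation, which dominates KS). Once those standard regularity conditions are in place, the chaining of the three inequalities above is routine and the bound follows immediately.
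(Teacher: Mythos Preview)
Your proposal is correct and follows essentially the same route as the paper: apply Proposition~\ref{prop:ccr} to identify the asymptotic conditional coverage, bound the gap by $\mathrm{KS}(P(V|X{=}x),P(V))$ via \Cref{eq:worst,eq:ksdef}, then use the triangle inequality together with Proposition~\ref{prop:dpi} and consistency of $p_\phi(y|x)$ to reduce everything to the regularization term bounded by $\epsilon$. The paper's proof is slightly terser on the regularity issues you flag (it simply passes to the limit) and additionally exhibits explicit consistent conditional density estimators (Nadaraya--Watson, least-squares CDE), but the core argument is identical.
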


This implies that asymptotically, if our regularization term converges to a small value by optimization, the deviation from the nominal level will also be small, therefore our algorithm can effectively improve the conditional coverage for the predictive set output by conformal prediction. In \Cref{sec:proof}, we show the existence of consistent estimator $\hat{p}(y|x)$ by nonparametric constructions. Note that the precision $\epsilon$ depends on the regularization strength $\lambda$. A large $\lambda$ improves the conditional coverage close to the nominal level but may affect the predictive set size due to the potential error of the predictive model. 
We offer a detailed discussion of the impact of $\lambda$ and $\gamma$ in \Cref{app:discussion}. We discuss the computation cost of our algorithm compared to traditional conformal prediction algorithm in \Cref{app:computation}.  
In what follows, we will empirically show the benefit of the proposed method through a comprehensive synthetic data example and real-world datasets. 

\section{Experiments}

We use both synthetic and real-world datasets to validate the proposed method. The code is available at \url{https://github.com/ruijiang81/adjusting_reg_model}. 

\noindent \textbf{Nonconformity Score:} We use three popular forms of conformity scores in our experiments:
\begin{itemize}
    \item Residual: $V(x,Y) = |Y-f(x)|$. It quantifies the absolute error between the true target and the predicted value. 
    \item Normalized: $V(x,Y) = |Y-f(x)|/\sigma(x)$, where $\sigma(x)$ is the estimated standard deviation output by a generated model or a Bayesian neural network. Compared to the residual score, the normalized score is adaptive and can form confidence intervals that are wider for data points that have higher uncertainty. We use the conditional generative models to calculate the standard deviation. 
    \item Quantile: $V(x,Y) = \max\{\hat{f}_{{\alpha_{\text{lo}}}}-Y, Y-\hat{f}_{{\alpha_{\text{hi}}}}\}$, where $\hat{f}_{{\alpha_{\text{lo}}}}, \hat{f}_{{\alpha_{\text{hi}}}}$ are the predicted quantiles output by method like quantile regression. For confidence level $\alpha$, we choose $\alpha_\text{lo} = (1-\alpha)/2$ and $\alpha_\text{hi} = (1+\alpha)/2$. 
\end{itemize}

\noindent \textbf{Baseline:} We compare against the following baselines:

\begin{itemize}
    \item Conformal Prediction (CP): We use the standard split conformal prediction method for each conformity score. 
    \item Orthogonal Quantile Regression \cite{feldman2021improving} (OQR): OQR improves standard quantile regression by regularizing on an objective, which is a necessary condition after perfect conditional coverage is achieved. We also compare against the corresponding conformalized algorithm Conformal Orthogonal Quantile Regression (COQR). 
    \item Generative Model: Since our algorithm uses a generative model to improve conditional coverage, a natural choice is to use the generative model directly for uncertainty quantification. As we shall see, the generative model generally does not enjoy the marginal coverage guarantee and may have worse conditional coverage. 
\end{itemize}

See \Cref{app:baselines} for a detailed discussion about the baselines. 

\begin{figure}
\centering
\begin{subfigure}{.47\textwidth}
  \centering
  \includegraphics[width=\linewidth]{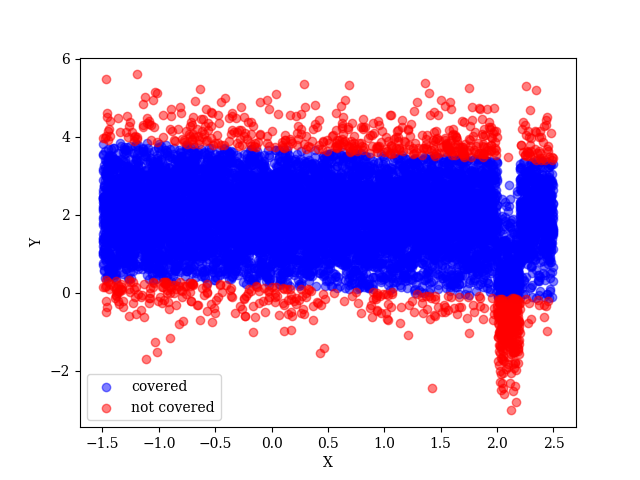}
  \caption{CP: 53.7\%}
  \label{fig:sub1}
\end{subfigure}%
\begin{subfigure}{.47\textwidth}
  \centering
  \includegraphics[width=\linewidth]{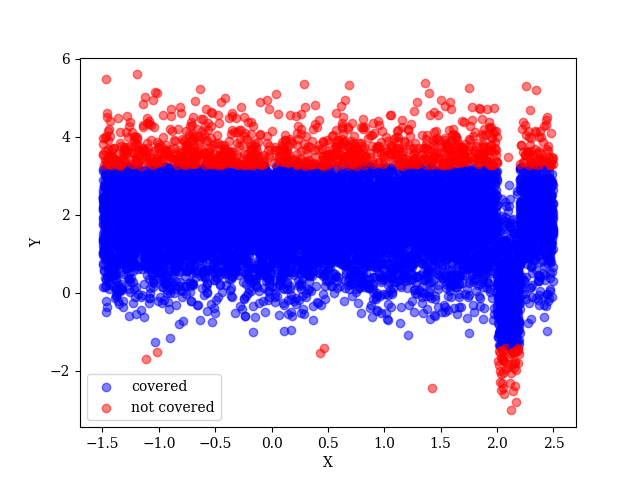}
  \caption{KS-CP: 87.0\%}
  \label{fig:sub2}
\end{subfigure} \\
\begin{subfigure}{.47\textwidth}
  \centering
  \includegraphics[width=\linewidth]{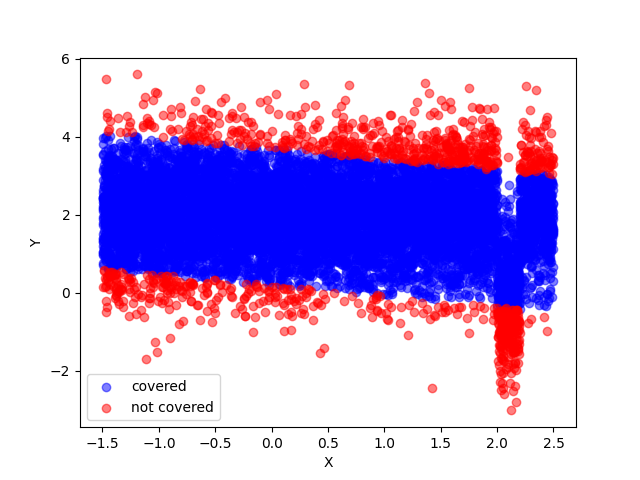}
  \caption{OQR: 56.8\%}
  \label{fig:sub3}    
\end{subfigure}
\begin{subfigure}{.47\textwidth}
  \centering
  \includegraphics[width=\linewidth]{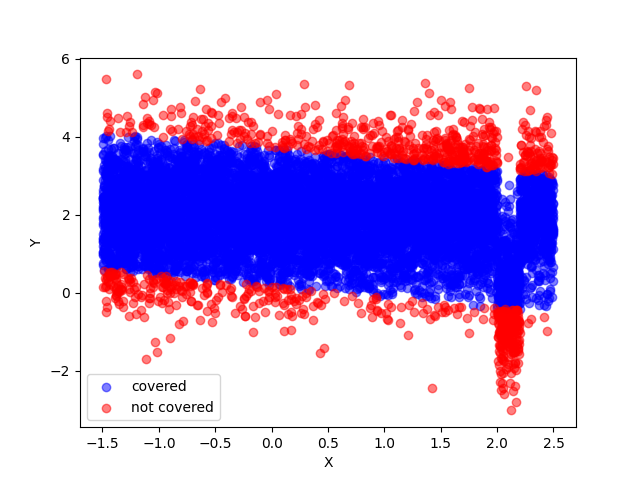}
  \caption{COQR: 65.0\%}
  \label{fig:sub4} 
\end{subfigure}\\
\begin{subfigure}{.47\textwidth}
  \centering
  \includegraphics[width=\linewidth]{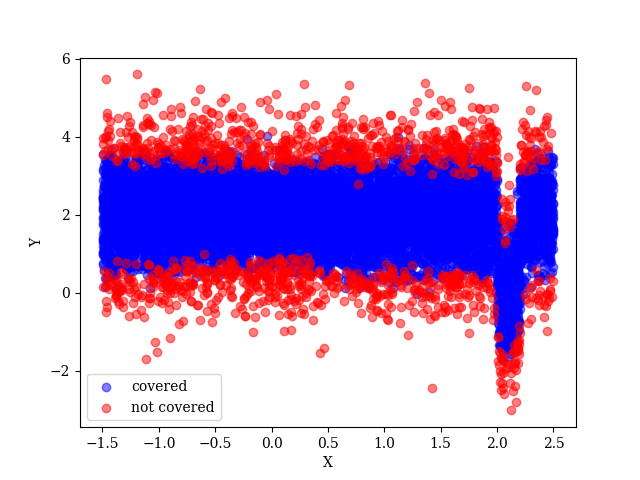}
  \caption{MDN: 56.0\%}
  \label{fig:sub5} 
\end{subfigure}
\begin{subfigure}{.47\textwidth}
  \centering
  \includegraphics[width=\linewidth]{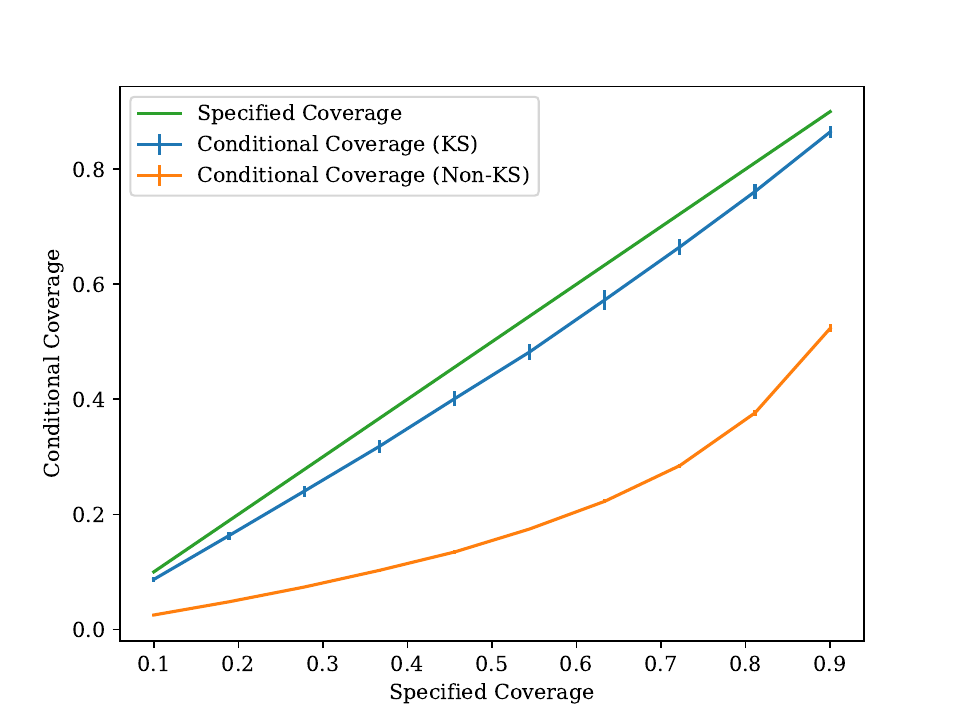}
  \caption{Conditional Coverage for all $\alpha$}
  \label{fig:syn1_allalpha}
\end{subfigure}
\caption{Coverage under Synthetic Data (Setting I) with Linear Regression, $1-\alpha=90\%$. Here we show the conditional coverage for each method. Our method can achieve the specified conditional coverage while all other methods have significantly lower conditional coverage.}
\label{fig:syn1_vis}
\end{figure}

\subsection{Synthetic Data}
We use two synthetic data-generating processes to illustrate the benefit of our method. We use linear regression as the model class for the synthetic data and report the performance trained with mean squared loss (MSE) and our algorithm. We report the marginal coverage, conditional coverage, set size, and MSE in \Cref{tab:syndata} for $\alpha=0.9$. We use the Mixture Density Network \cite{bishop1994mixture} as the conditional generative model. We choose $\lambda=1000$ and $\gamma=10$. We use 2000 training samples, 1000 calibration samples, and 10000 test samples. We repeat the experiment over five runs. 

For the first setting, we consider the setting where some subgroups may be undercovered. $X\sim U[-1.5, 2.5]$, $Y = \begin{cases}
      0 + \epsilon& \text{if } 2 \leq X\leq 2.2\\
      2 + \epsilon& \text{otherwise}
    \end{cases}  $, where $\epsilon\sim\mathcal{N}(0,1)$, then a linear regression model would significantly undercover for the subgroup $2 \leq X\leq 2.2$. This corresponds to a case where some small subgroup of the population has a different target distribution. 

The visualization of the first setup is included in \Cref{fig:syn1_vis}. Since the subgroup has a small population, a simple linear regression algorithm trained with MSE loss will ignore the subgroup and provide uninformative confidence intervals for this group. Therefore, the standard conformal prediction has poor conditional coverage as shown in \Cref{fig:sub1}. By constraining on the worst conditional miscoverage measured by the KS distance, KS-CP intentionally outputs a function with worse MSE to achieve a near-perfect conditional coverage. We can also see while existing methods like OQR and COQR improve the conditional coverage, it still has a significantly worse conditional coverage for the small subpopulation compared to our method. While our method relies on a conditional generative model, a conditional generative model generally does not have the finite-sample marginal coverage guarantee as demonstrated in \Cref{tab:syndata}. Even for such a simple toy example, the marginal coverage of the MDN model does not equal the marginal coverage while all conformal prediction methods enjoy the distributionally-free guarantee. Also, we find that MDN has a significantly lower conditional coverage than the nominal coverage, which is likely because of the smoothness property of neural networks \cite{grinsztajn2022tree}. 
We also show the conditional coverage for our method in \Cref{fig:syn1_allalpha} across $\alpha$ for the residual non-conformity score. Compared to the standard conformal prediction algorithm, our algorithm significantly improves the conditional coverage across the specified nominal coverages $\alpha$. 

In the visualization, we can see that KS-CP increases the set size globally for all $x$. We would like to note this effect is evitable for some non-conformity scores like residual and normalized scores since a different $f$ can only impact $q^*$, which globally affect the set size (the set size of residual nonconformity score is $2q^*$). However, for other nonconformity scores such as the quantile score, the change in the set size can be adaptive to $x$, which we give an example in \Cref{app:adaptive}. 

For the second data-generating process, we assume $X\sim U[-1.5, 2.5]$, $Y = \epsilon$, where $\epsilon\sim\mathcal{N}(0,1)$. The second synthetic data setting serves as a sanity check when the vanilla conformal prediction method can achieve perfect coverage. The visualization and quantitative results are shown in \Cref{fig:syn2_vis} in Appendix and \Cref{tab:syndata}, respectively. In this case, almost all methods achieve the perfect conditional coverage since the condition in \Cref{prop:mc_and_cc} is directly satisfied by the MSE minimizing function. In this case, our method also outputs the correct function without a significantly larger cost in the set size and MSE.

\begin{table}[!ht]
    \centering
    \begin{tabular}{ccccccc}
        Nonconformity score & Dataset & Method & MC & CC & Set Size & MSE \\ \toprule
        Residual & Syn I & CP & 0.90(0.00) & 0.55(0.01) & 3.62(0.04) & 1.17(0.01) \\
        Residual & Syn I & KS-CP & 0.89(0.01) & \textbf{0.87}(0.01) & 4.59(0.10) & 2.11(0.09) \\ \hline
        Normalized & Syn I & CP & 0.91(0.00) & 0.44(0.03) & 3.64(0.03) & 1.17(0.01) \\ 
        Normalized & Syn I & KS-CP & 0.90(0.00) & \textbf{0.65}(0.04) & 4.16(0.15) & 1.59(0.07) \\ \hline
        Quantile & Syn I & CP & 0.91(0.00) & 0.69(0.01) & 3.63(0.03) & NA \\ 
        Quantile & Syn I & KS-CP & 0.90(0.01) & \textbf{0.87}(0.01) & 4.18(0.09) & NA \\ 
        Quantile & Syn I & COQR & 0.90(0.00) & 0.53(0.04) & 3.65(0.05) & NA \\ \hline
        NA & Syn I & CDE-MDN & 0.87(0.00) & 0.63(0.04) & 3.10(0.03) & 1.03(0.01) \\ 
        NA & Syn I & OQR & 0.65(0.05) & 0.24(0.05) & 2.05(0.22) & NA \\ \hline 
        Residual & Syn II & CP & 0.91(0.00) & \textbf{0.90}(0.01) & 3.34(0.03) & 1.00(0.01) \\ 
        Residual & Syn II & KS-CP & 0.90(0.01) & 0.89(0.01) & 3.93(0.23) & 1.41(0.14) \\ \hline
        Normalized & Syn II & CP & 0.89(0.00) & 0.74(0.01) & 3.22(0.05) & 1.00(0.01) \\ 
        Normalized & Syn II & KS-CP & 0.91(0.00) & \textbf{0.76}(0.02) & 3.88(0.28) & 1.34(0.20) \\ \hline
        Quantile & Syn II & CP & 0.91(0.00) & 0.90(0.01) & 4.03(0.03) & NA \\ 
        Quantile & Syn II & KS-CP & 0.91(0.00) & 0.94(0.01) & 4.06(0.07) & NA \\ 
        Quantile & Syn II & COQR & 0.90(0.00) & \textbf{0.95}(0.00) & 4.02(0.05) & NA \\ \hline
        NA & Syn II & CDE-MDN & 0.87(0.00) & 0.68(0.01) & 3.10(0.03) & 1.01(0.01) \\ 
        NA & Syn II & OQR & 0.89(0.00) & 0.94(0.00) & 3.89(0.04) & NA \\ \bottomrule
    \end{tabular}
    \caption{Quantitative results for each method with synthetic data. $1-\alpha=90\%$. We report Marginal Coverage (MC), Conditional Coverage (CC), Set Size and Mean Squared Error (MSE). KS-CP consistently improves conditional coverage over standard conformal prediction across settings. The best method with the same non-conformity score is made bold.}
    \label{tab:syndata}
\end{table}

\begin{table}[!htbp]
\resizebox{\textwidth}{!}{%
\begin{tabularx}{\textwidth}{XXlXXXX}
NC Score   & Dataset & Method   & MC         & WSLAB      & Set Size   & MSE        \\ \toprule
Residual   & bike        & CP       & 0.90(0.00) & 0.73(0.00) & 0.76(0.00) & 0.07(0.00) \\
Residual   & bike        & KS-CP    & 0.90(0.00) & \textbf{0.79}(0.01) & 2.49(0.01) & 0.60(0.00) \\ \hline 
Normalized & bike        & CP       & 0.90(0.00) & 0.75(0.01) & 0.76(0.00) & 0.07(0.00) \\
Normalized & bike        & KS-CP    & 0.90(0.00) & \textbf{0.79}(0.00) & 2.36(0.01) & 0.56(0.00) \\ \hline 
Quantile   & bike        & CP       & 0.90(0.00) & \textbf{0.89}(0.02) & 0.68(0.01) & NA         \\
Quantile   & bike        & KS-CP    & 0.91(0.00) & \textbf{0.89}(0.02) & 2.18(0.02) & NA         \\
Quantile   & bike        & COQR     & 0.89(0.00) & 0.83(0.03) & 2.10(0.03) & NA         \\ \hline 
Non-CP     & bike        & CDE-CVAE & 0.84(0.00) & 0.69(0.00) & 1.86(0.01) & 0.50(0.00) \\
Non-CP     & bike        & OQR      & 0.89(0.02) & 0.81(0.02) & 2.08(0.08) & NA         \\ \hline 
Residual   & communities & CP       & 0.90(0.00) & 0.80(0.01) & 3.39(0.06) & 1.09(0.03) \\
Residual   & communities & KS-CP    & 0.89(0.00) & \textbf{0.82}(0.06) & 3.42(0.15) & 1.20(0.18) \\ \hline 
Normalized & communities & CP       & 0.89(0.00) & \textbf{0.81}(0.02) & 3.02(0.05) & 1.09(0.03) \\
Normalized & communities & KS-CP    & 0.91(0.00) & \textbf{0.81}(0.02) & 4.15(0.21) & 1.80(0.16) \\ \hline 
Quantile   & communities & CP       & 0.90(0.01) & 0.81(0.03) & 2.79(0.08) & NA         \\
Quantile   & communities & KS-CP    & 0.89(0.01) & \textbf{0.84}(0.04) & 1.79(0.09) & NA         \\
Quantile   & communities & COQR     & 0.91(0.01) & \textbf{0.84}(0.02) & 2.20(0.09) & NA         \\ \hline 
Non-CP     & communities & CDE-CVAE & 0.74(0.00) & 0.50(0.00) & 1.20(0.01) & 0.39(0.00) \\
Non-CP     & communities & OQR      & 0.29(0.14) & 0.25(0.13) & 0.45(0.23) & NA         \\ \hline 
Residual   & parkinsons  & CP       & 0.90(0.00) & 0.75(0.01) & 0.40(0.01) & 0.03(0.00) \\
Residual   & parkinsons  & KS-CP    & 0.91(0.00) & \textbf{0.83}(0.01) & 2.89(0.03) & 0.74(0.02) \\ \hline 
Normalized & parkinsons  & CP       & 0.89(0.00) & 0.75(0.01) & 0.41(0.03) & 0.03(0.00) \\
Normalized & parkinsons  & KS-CP    & 0.91(0.00) & \textbf{0.79}(0.00) & 3.01(0.09) & 0.81(0.04) \\ \hline 
Quantile   & parkinsons  & CP       & 0.90(0.00) & \textbf{0.84}(0.02) & 0.35(0.04) & NA         \\
Quantile   & parkinsons  & KS-CP    & 0.90(0.00) & \textbf{0.84}(0.02) & 3.03(0.04) & NA         \\
Quantile   & parkinsons  & COQR     & 0.90(0.00) & 0.82(0.01) & 2.58(0.08) & NA         \\ \hline 
Non-CP     & parkinsons  & CDE-CVAE & 0.89(0.00) & 0.76(0.01) & 2.63(0.01) & 0.72(0.00) \\
Non-CP     & parkinsons  & OQR      & 0.88(0.01) & 0.80(0.01) & 2.46(0.08) & NA \\ \hline 
Residual   & meps19  & CP       & 0.90(0.00) & 0.74(0.01) & 3.27(0.01) & 1.01(0.01) \\ 
Residual   & meps19  & KS-CP    & 0.91(0.00) & \textbf{0.77}(0.01) & 2.81(0.01) & 0.76(0.00) \\ \hline 
Normalized & meps19  & CP       & 0.91(0.00) & 0.67(0.01) & 3.19(0.01) & 1.01(0.01) \\
Normalized & meps19  & KS-CP    & 0.91(0.00) & \textbf{0.81}(0.00) & 2.81(0.01) & 1.00(0.03) \\ \hline 
Quantile   & meps19  & CP       & 0.90(0.00) & 0.68(0.03) & 2.68(0.04) & NA         \\
Quantile   & meps19  & KS-CP    & 0.91(0.00) & \textbf{0.83}(0.03) & 5.30(0.76) & NA         \\
Quantile   & meps19  & COQR     & 0.91(0.00) & 0.69(0.03) & 2.71(0.04) & NA         \\  \hline 
Non-CP     & meps19  & CDE-CVAE & 0.85(0.00) & 0.75(0.01) & 2.01(0.00) & 0.60(0.00) \\
Non-CP     & meps19  & OQR      & 0.82(0.02) & 0.62(0.05) & 2.15(0.11) & NA         \\ \hline 
Residual   & meps20  & CP       & 0.91(0.00) & 0.75(0.00) & 3.08(0.01) & 0.86(0.00) \\
Residual   & meps20  & KS-CP    & 0.91(0.00) & \textbf{0.81}(0.00) & 2.84(0.01) & 0.72(0.00) \\ \hline 
Normalized & meps20  & CP       & 0.91(0.00) & 0.73(0.01) & 2.98(0.01) & 0.86(0.00) \\
Normalized & meps20  & KS-CP    & 0.91(0.00) & \textbf{0.82}(0.00) & 2.75(0.01) & 0.75(0.01) \\ \hline 
Quantile   & meps20  & CP       & 0.91(0.00) & 0.82(0.02) & 2.70(0.04) & NA         \\
Quantile   & meps20  & KS-CP    & 0.90(0.00) & \textbf{0.86}(0.03) & 3.92(0.03) & NA         \\
Quantile   & meps20  & COQR     & 0.91(0.00) & 0.78(0.01) & 2.67(0.03) & NA         \\ \hline 
Non-CP     & meps20  & CDE-CVAE & 0.82(0.00) & 0.74(0.00) & 2.15(0.00) & 0.67(0.00) \\
Non-CP     & meps20  & OQR      & 0.84(0.01) & 0.69(0.03) & 2.25(0.07) & NA         \\ \hline 
Residual   & meps21  & CP       & 0.91(0.00) & 0.73(0.01) & 3.05(0.01) & 0.87(0.00) \\
Residual   & meps21  & KS-CP    & 0.91(0.00) & \textbf{0.80(0.00)} & 2.75(0.01) & 0.70(0.00) \\ \hline 
Normalized & meps21  & CP       & 0.90(0.00) & 0.79(0.01) & 2.94(0.01) & 0.86(0.01) \\
Normalized & meps21  & KS-CP    & 0.90(0.00) & \textbf{0.84(0.01)} & 2.68(0.02) & 0.73(0.01) \\ \hline 
Quantile   & meps21  & CP       & 0.91(0.00) & 0.77(0.01) & 2.73(0.02) & NA         \\
Quantile   & meps21  & KS-CP    & 0.90(0.00) & \textbf{0.84(0.01)} & 6.36(0.48) & NA         \\
Quantile   & meps21  & COQR     & 0.90(0.00) & 0.76(0.01) & 2.72(0.03) & 0.00(0.00) \\ \hline 
Non-CP     & meps21  & CDE-CVAE & 0.84(0.00) & 0.82(0.00) & 2.11(0.01) & 0.62(0.00) \\
Non-CP     & meps21  & OQR      & 0.82(0.02) & 0.70(0.02) & 2.26(0.05) & NA\\ 
\bottomrule
\end{tabularx}
}
\caption{Quantitative metrics of each method with UCI datasets. $1-\alpha=90\%$. We report Marginal Coverage (MC), WSLAB, Set Size and Mean Squared Error (MSE). KS-CP improves conditional coverage over other conformal prediction methods across datasets. The best method with the same non-conformity score is made bold.}
\label{tab:ucidata}
\end{table}

\begin{figure}[!htbp]
\centering
\begin{subfigure}{.45\textwidth}
  \centering
  \includegraphics[width=\linewidth]{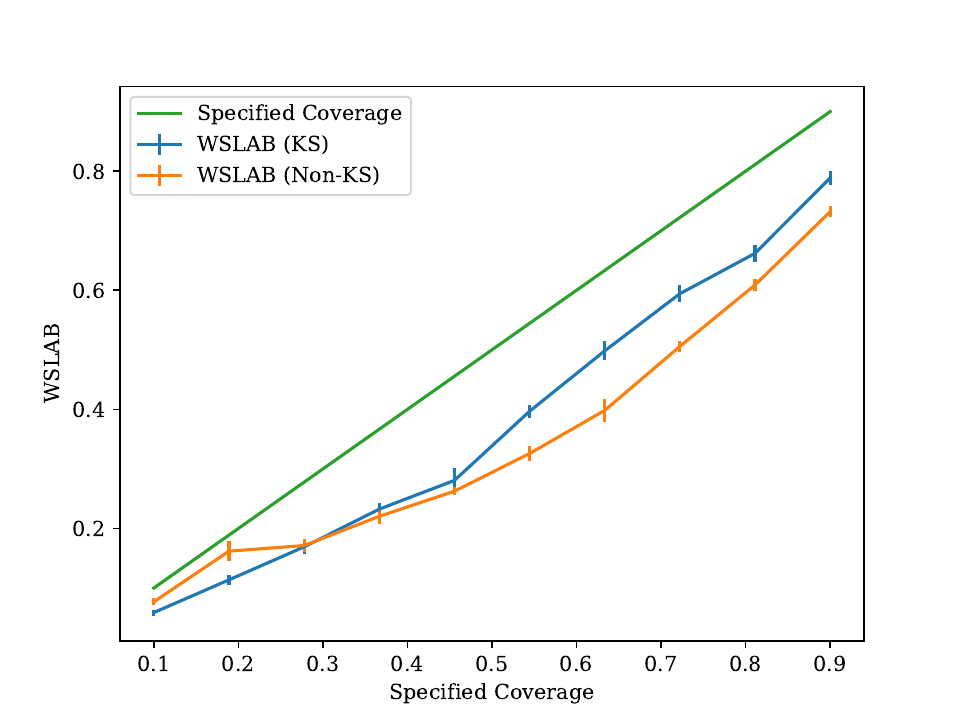}
  \caption{Bike}
  \label{fig:bike}
\end{subfigure} 
\begin{subfigure}{.45\textwidth}
  \centering
  \includegraphics[width=\linewidth]{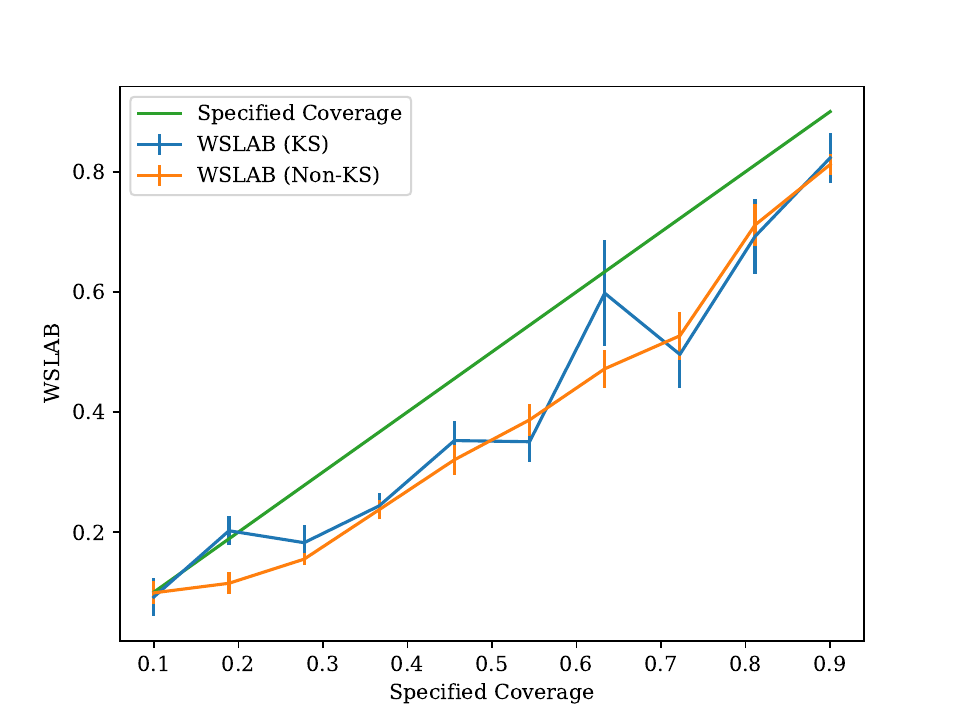}
  \caption{Communities}
  \label{fig:communities}
\end{subfigure}\\
\begin{subfigure}{.45\textwidth}
  \centering
  \includegraphics[width=\linewidth]{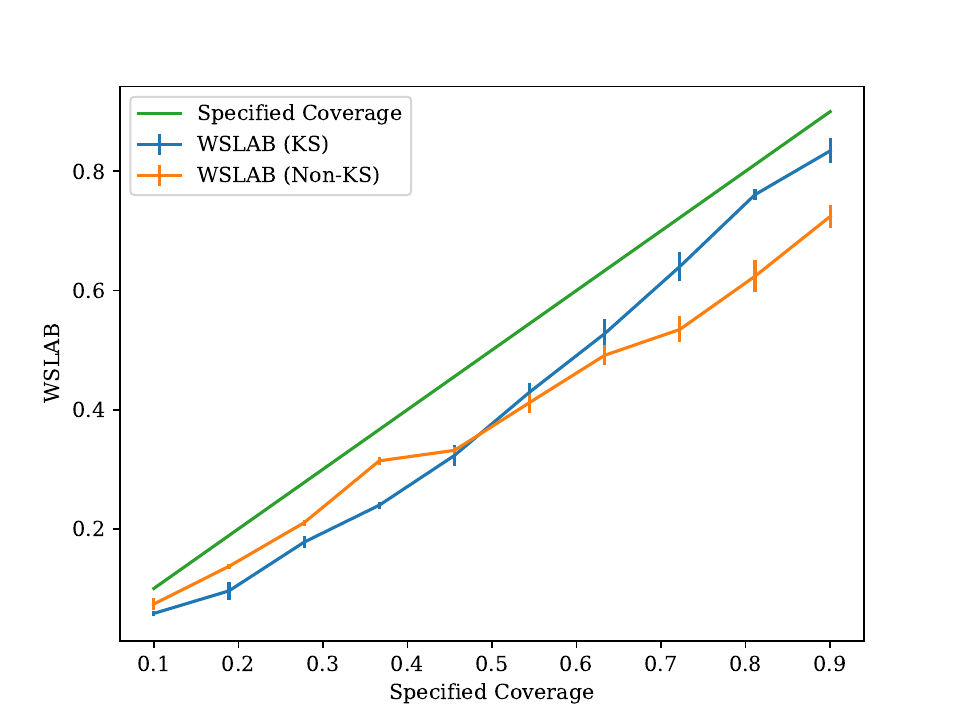}
  \caption{Parkinsons}
  \label{fig:parkinsons}
\end{subfigure}
\begin{subfigure}{.45\textwidth}
  \centering
  \includegraphics[width=\linewidth]{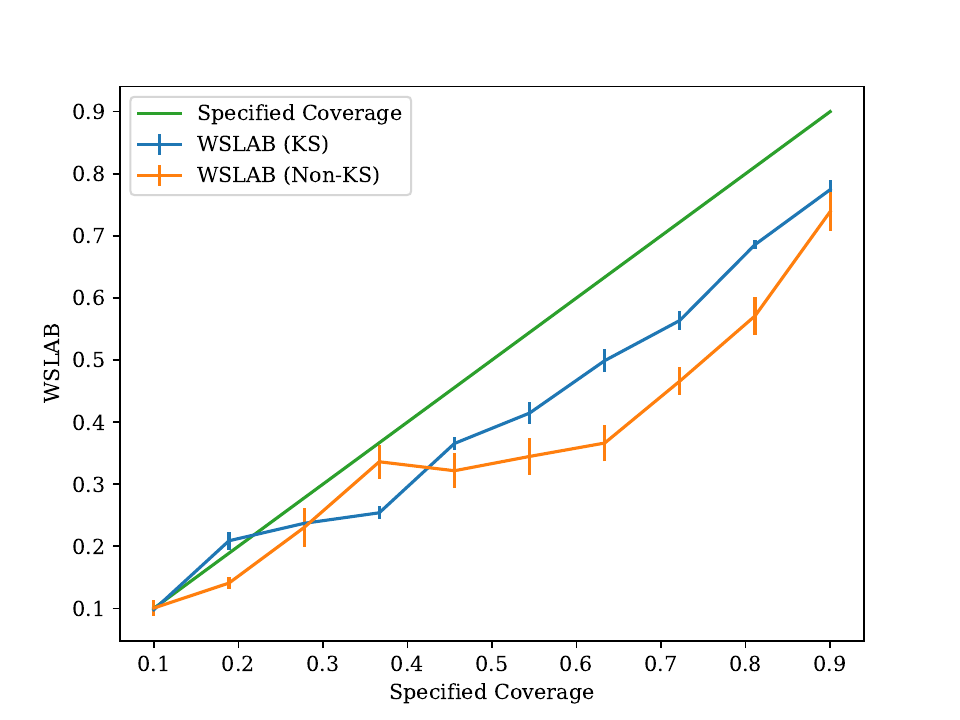}
  \caption{MEPS-19}
  \label{fig:meps19}
\end{subfigure} \\ 
\begin{subfigure}{.45\textwidth}
  \centering
  \includegraphics[width=\linewidth]{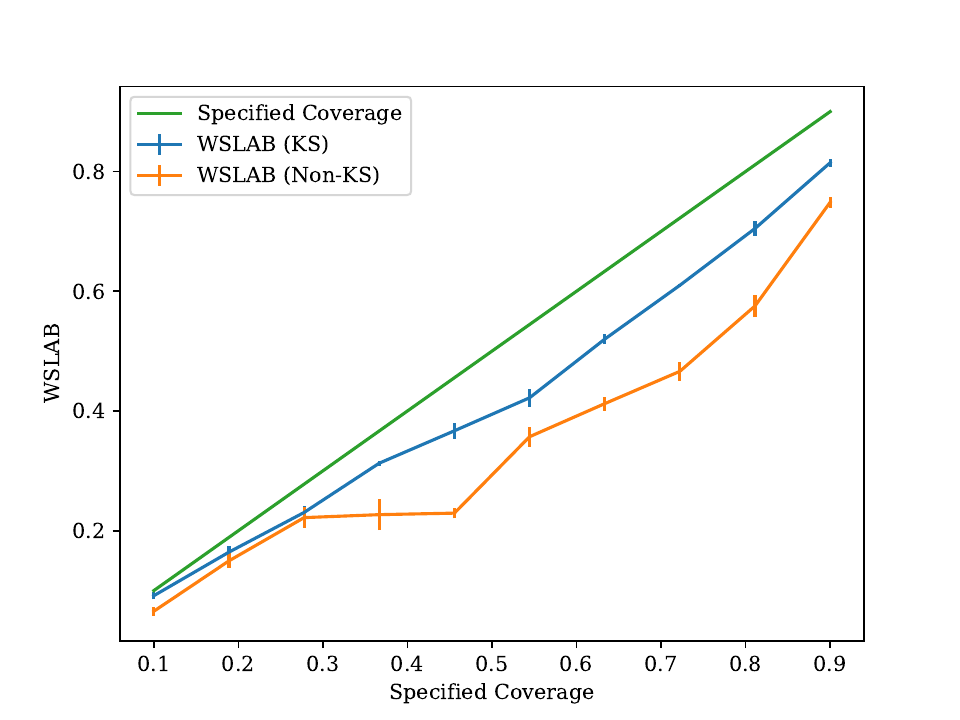}
  \caption{MEPS-20}
  \label{fig:meps20}
\end{subfigure} 
\begin{subfigure}{.45\textwidth}
  \centering
  \includegraphics[width=\linewidth]{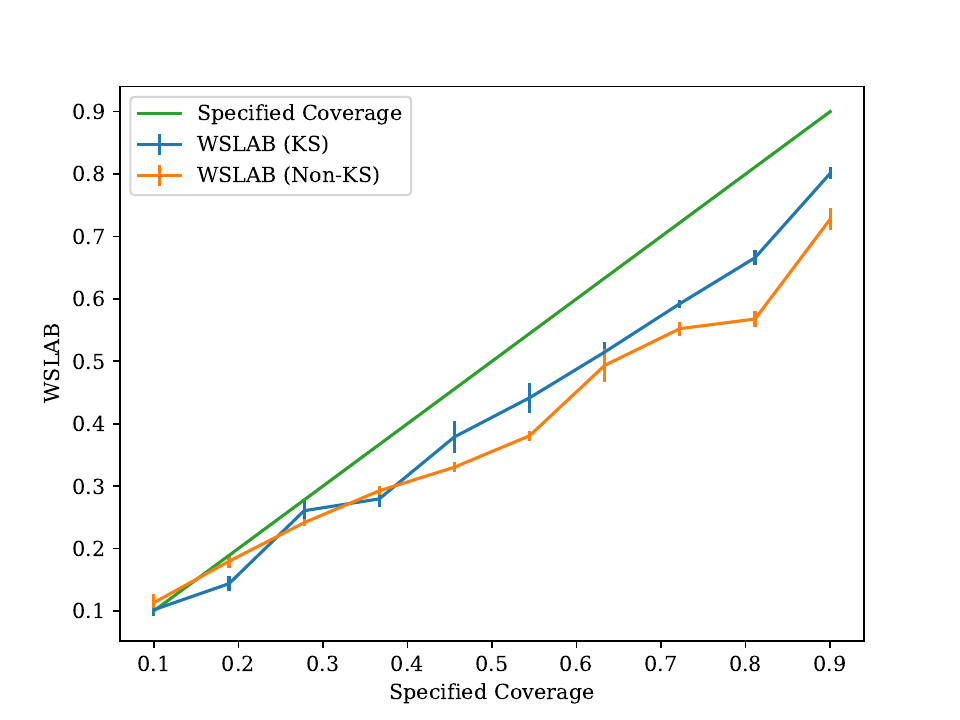}
  \caption{MEPS-21}
  \label{fig:meps21}
\end{subfigure}%
\caption{WSLAB for UCI Datasets with residual score across $\alpha$. Our method consistently improves WSLAB among all $\alpha$.}
\label{fig:ucidata}
\end{figure}

\subsection{Real-World Data}

%We use two real-world datasets motivated by fairness applications. The communities-and-crime dataset \cite{redmond2002data} has the demographic feature of a community and its corresponding per capita crime rate. LSAT dataset \cite{wightman1998lsac} has demographics features like gender, race, age, and student information include family income, part-time status and lsat scores, the predictive target is students' first-year GPA. 

We use 6 UCI datasets to further validate the proposed method. The complete data statistics is included in \Cref{app:datastat}. The model class is a three-layer feed-forward neural network with LeakyReLU activation. We choose $\lambda=100$ and $\gamma=10$. 
Since we do not have access to the underlying data-generating process using real-world datasets, we use the worst-slab coverage (WSLAB) \cite{Cauchois2021KnowingWY} as the approximated conditional coverage. WSLAB measures the worst coverage on randomly sampled slabs on the test data, which is a surrogate measure of the worst group coverage. 
If the WSLAB has a good coverage, the method should have a good subgroup coverage over all slabs. 
Moreover, it is efficient to compute with a $O(n)$ time \citep{chung2005optimal} and has been used in many conformal prediction papers as a surrogate measure to examine conditional coverage
\citep{Cauchois2021KnowingWY,feldman2021improving,wang2022probabilistic}.
 We use the conditional Variational AutoEncoder \cite{sohn2015learning} as the conditional generative model. We split the data into 50\% training, 20\% calibration, and 30\% test. We repeat the experiment over five runs.

We report the marginal coverage, conditional coverage, set size, and MSE in \Cref{tab:ucidata} for $\alpha=0.9$ for different methods and non-conformity scores. Our method improves WSLAB across different choices of non-conformity scores and datasets. For each non-conformity score, our method consistently improves over the standard conformal prediction method. We do not observe a consistent improvement in WSLAB for OQR and COQR. Similar to the synthetic data set, using the conditional generative model alone for uncertainty quantification does not have the finite-sample marginal coverage guarantee and empirically may have worse WSLAB than our method. 
We also report WSLAB across $\alpha$ in \Cref{fig:ucidata} for the residual non-conformity score. Compared to standard conformal prediction, our method consistently improves the worst WSLAB miscoverage among all $\alpha$.

\begin{figure}[!htbp]
\centering
\begin{subfigure}{.45\textwidth}
  \centering
  \includegraphics[width=\linewidth]{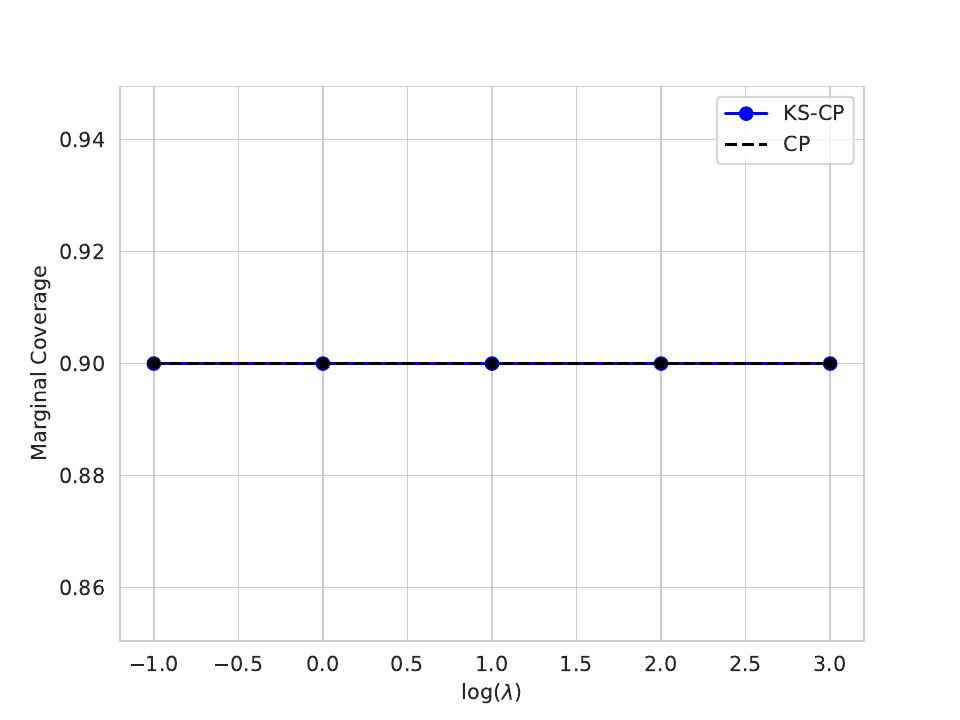}
  \caption{MC}
  \label{fig:ab_mc}
\end{subfigure}%
\begin{subfigure}{.45\textwidth}
  \centering
  \includegraphics[width=\linewidth]{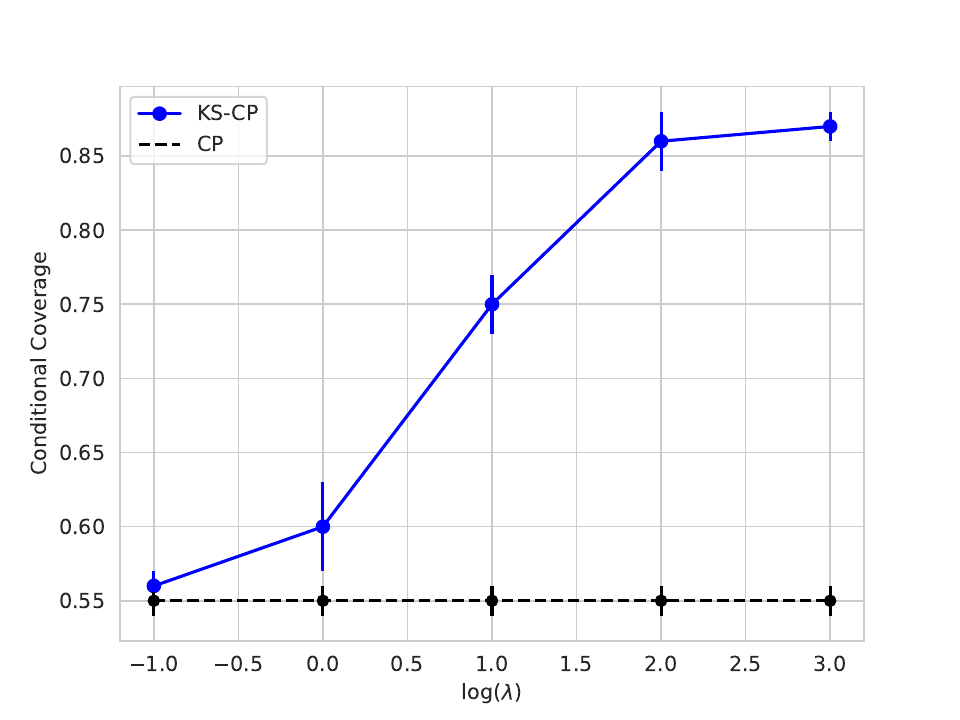}
  \caption{CC}
  \label{fig:ab_cc}
\end{subfigure} \\
\begin{subfigure}{.45\textwidth}
  \centering
  \includegraphics[width=\linewidth]{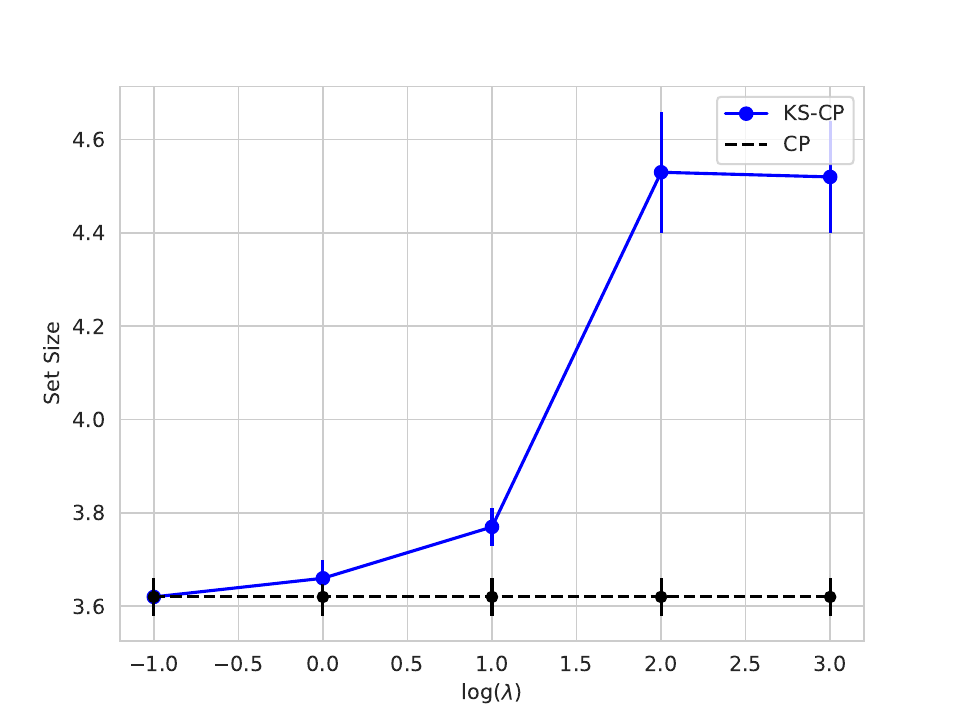}
  \caption{Set Size}
  \label{fig:ab_ss}
\end{subfigure}%
\begin{subfigure}{.45\textwidth}
  \centering
  \includegraphics[width=\linewidth]{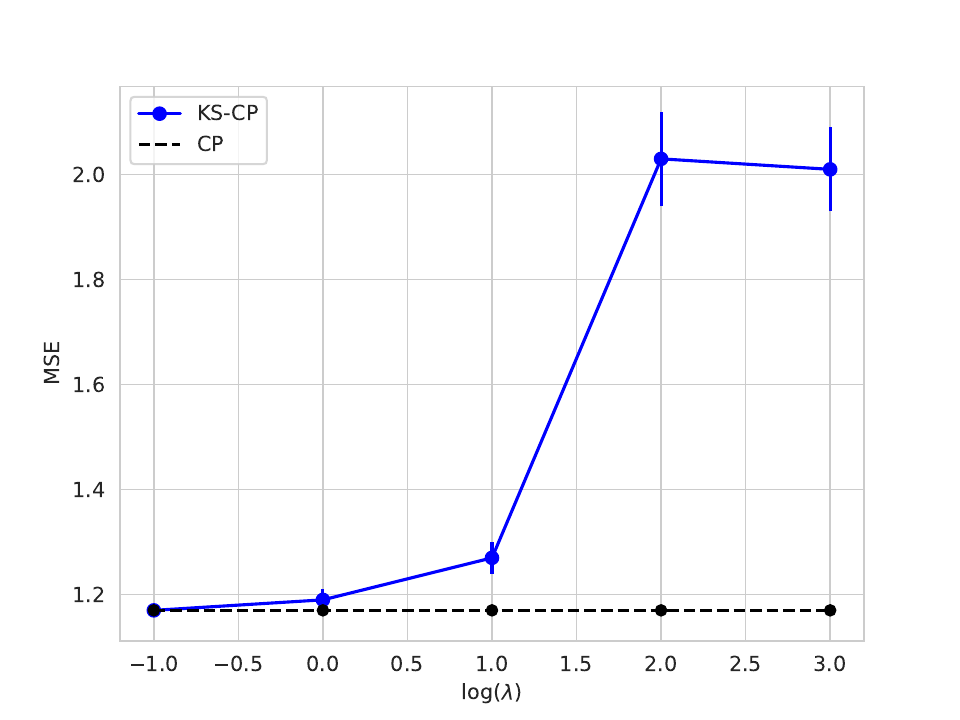}
  \caption{MSE}
  \label{fig:ab_mse}
\end{subfigure} 
\caption{Ablation Studies for Different Choices of $\lambda$ for Synthetic Data Setup I. We report the Marginal Coverage (MC), Conditional Coverage (CC), Set Size, and MSE for $\log(\lambda) = -1,0,1,2,3$ when $1-\alpha=90\%$.}
\label{fig:ablation}
\end{figure}

\subsection{Ablation Study}
In our objective \Cref{eqn:emp_objective}, we use $\lambda$ to balance the MSE loss and the conditional coverage. 
In this section, we show the results for our synthetic data settings for different values of $\lambda$. We report the Marginal Coverage, Conditional Coverage, Set Size and MSE for $\log(\lambda) = -1,0,1,2,3$ when $1-\alpha=90\%$ in \Cref{fig:ablation}. 

Since conformal prediction enjoys the marginal coverage guarantee, the marginal coverage is not affected by $\lambda$. As $\lambda$ increases, the confidence intervals of KS-CP get wider, and the conditional coverage gets closer to the true specified nominal coverage. As a result of this, the MSE of the fitted function also gets worse to get better conditional coverage for the worst subgroup. This suggests our regularization can effectively improve the conditional coverage for uncertainty quantification purposes and may sometimes be a competing objective to the standard MSE loss. We also report the ablation studies for synthetic data setting II in \Cref{app:ablation}. Since the standard conformal prediction method can achieve perfect conditional coverage in the second synthetic data setting, the conditional coverage is not affected by $\lambda$. Similarly, the set size and the MSE of the fitted function are also not affected much by $\lambda$, which is desired since we do not want to sacrifice the MSE for conditional coverage when the standard conformal prediction method can achieve perfect conditional coverage.
In practice, we can set $\lambda$ to a large number such as 100 or use the validation dataset and a surrogate measure such as WSLAB to select the best $\lambda$.

\section{Conclusion}

In this paper, we investigate how to train uncertainty-aware regression functions to improve the conditional coverage of the function after applying the conformal prediction procedure by matching the marginal and conditional non-conformity score distribution. We theoretically show that the worst conditional distribution miscoverage is upper bounded by the Kolmogorov–Smirnov distance between the marginal and conditional non-conformity score distribution and propose a novel algorithm to optimize it by leveraging a conditional generative model. We demonstrate the efficacy of the proposed method using synthetic and real-world datasets. However, our method still requires the conditional generative model to have enough capacity to have an informative upper bound; future work can consider how to derive a tighter upper bound for more efficient conditional coverage improvement. It would also be interesting to consider how to extend the proposed framework in a causal inference setup or non-stationary environment where the exchangeability assumption is violated. 

\FloatBarrier

\newpage
\backmatter

%\bmhead{Supplementary information}

%If your article has accompanying supplementary file/s please state so here. 

%Authors reporting data from electrophoretic gels and blots should supply the full unprocessed scans for key as part of their Supplementary information. This may be requested by the editorial team/s if it is missing.

%Please refer to Journal-level guidance for any specific requirements.

%\bmhead{Acknowledgments}

%Acknowledgments are not compulsory. Where included they should be brief. Grant or contribution numbers may be acknowledged.

%Please refer to Journal-level guidance for any specific requirements.

% \bigskip
% \begin{flushleft}%
% Editorial Policies for:

% \bigskip\noindent
% Springer journals and proceedings: \url{https://www.springer.com/gp/editorial-policies}

% \bigskip\noindent
% Nature Portfolio journals: \url{https://www.nature.com/nature-research/editorial-policies}

% \bigskip\noindent
% \textit{Scientific Reports}: \url{https://www.nature.com/srep/journal-policies/editorial-policies}

% \bigskip\noindent
% BMC journals: \url{https://www.biomedcentral.com/getpublished/editorial-policies}
% \end{flushleft}

\newpage
\appendix
\begin{appendices}

\section{Proof}\label{sec:proof}

\begin{proof}[Proof for \Cref{prop:ccr}]
By \Cref{thm:mc}, assuming continuous nonconformity score, we have
    \begin{align}
        1-\alpha \leq P(V(X,Y)\leq q^*) \leq 1-\alpha +\frac{1}{n+1}.
    \end{align}
    Take the inverse CDF of $p(V)$, 
    \begin{align}
        F^{-1}(1-\alpha) \leq q^* \leq F^{-1}(1-\alpha+\frac{1}{n+1}).
    \end{align}
    Then take the CDF of $p(V|X)$, 
    \begin{align}
       G_x(F^{-1}(1-\alpha)) \leq P(V(X,Y)\leq q^*|X=x) = G_x(q^*) \leq G_x(F^{-1}(1-\alpha+\frac{1}{n+1})).
    \end{align}
    Thus the asymptotic coverage rate given $x$ is 
    \begin{align}
        G_x(F^{-1}(1-\alpha)),
    \end{align}
    when $n \to \infty$
\end{proof}

\begin{proof}[Proof for \Cref{prop:dpi}]
When $V(x,Y) = |Y-f(x)|$, 
\begin{align}
    &\text{KS}(P_V(V),Q_V(V)) = \max_v |F_V(v) - G_V(v)| \notag\\
    & = \max_v |P_V(V\leq v) - Q_V(V\leq v)| \notag\\ 
    & = \max_v |P_Y(|Y-f(x)|\leq v) - Q_Y(|Y-f(x)|\leq v)| \notag\\ 
    & = \max_v |P_Y( f(x) - v\leq Y \leq f(x) + v) - Q_Y( f(x) - v\leq Y \leq f(x) + v)|  \notag\\
    & = \max_v |P_Y(Y \leq f(x) + v) - P_Y(Y \leq f(x) - v) - Q_Y( Y \leq f(x) + v) + Q_Y( Y \leq f(x) - v)|  \notag\\
    & \leq \max_v |P_Y(Y\leq f(x)+v) - Q_Y(Y\leq f(x)+v)| \nonumber \\ & + \max_v |P_Y(Y\leq f(x)-v) - Q_Y(Y\leq f(x)-v)| \notag \\
    & = 2\text{KS}(P_Y(Y),Q_Y(Y)), 
\end{align}
where the inequality is by the triangle inequality. 

Similarly, for the normalized non-conformity score when $V(x,Y) = |Y-f(x)|/\sigma(x)$, 
\begin{align}
    &\text{KS}(P_V(V),Q_V(V)) = \max_v |F_V(v) - G_V(v)| \notag\\
    & = \max_v |P_Y(|Y-f(x)|\leq v\sigma(x)) - Q_Y(|Y-f(x)|\leq v\sigma(x))| \notag\\ 
    & = \max_v |P_Y( f(x) - v\sigma(x)\leq Y \leq f(x) + v\sigma(x)) - Q_Y( f(x) - v\sigma(x)\leq Y \leq f(x) + v\sigma(x))|  \notag\\
    & = \max_v |P_Y(Y \leq f(x) + v\sigma(x)) - P_Y(Y \leq f(x) - v\sigma(x)) \\
     & \qquad\qquad - Q_Y( Y \leq f(x) + v\sigma(x)) + Q_Y( Y \leq f(x) - v\sigma(x))|  \notag\\
    & \leq \max_v |P_Y(Y\leq f(x)+v\sigma(x)) - Q_Y(Y\leq f(x)+v\sigma(x))| \nonumber \\ & \qquad\qquad + \max_v |P_Y(Y\leq f(x)-v\sigma(x)) - Q_Y(Y\leq f(x)-v\sigma(x))| \notag \\
    & = 2\text{KS}(P_Y(Y),Q_Y(Y)).
\end{align}

For the quantile non-conformity score when $V(x,Y) = \max\{\hat{f}_{{\alpha_{\text{lo}}}}-Y, Y-\hat{f}_{{\alpha_{\text{hi}}}}\}$, 
\begin{align}
    &\text{KS}(P_V(V),Q_V(V)) = \max_v |F_V(v) - G_V(v)| \notag\\
    & = \max_v |P_Y(Y \leq \hat{f}_{{\alpha_{\text{hi}}}} + v) - P_Y(Y \leq \hat{f}_{{\alpha_{\text{lo}}}} - v) \\
     & \qquad\qquad - Q_Y( Y \leq \hat{f}_{{\alpha_{\text{hi}}}} + v) + Q_Y( Y \leq \hat{f}_{{\alpha_{\text{lo}}}} - v)|  \notag\\
    & \leq \max_v |P_Y(Y\leq \hat{f}_{{\alpha_{\text{hi}}}}+v) - Q_Y(Y\leq \hat{f}_{{\alpha_{\text{hi}}}}+v)| \nonumber \\ & \qquad\qquad + \max_v |P_Y(Y\leq \hat{f}_{{\alpha_{\text{lo}}}}-v) - Q_Y(Y\leq \hat{f}_{{\alpha_{\text{lo}}}}-v)| \notag \\
    & = 2\text{KS}(P_Y(Y),Q_Y(Y)).
\end{align}
\end{proof}

\begin{proof}[Proof for \Cref{prop:asy}]
Suppose the consistency of the conditional density estimator, i.e. $\lim_{n\to \infty}\text{KS}(p(y|x), \hat{p}(y | x)) =0
$. By \Cref{prop:ccr,prop:dpi} and the equation above, we have 
\begin{align}
&\lim_{n\to \infty} \max_\alpha |p(Y \in \hat{C}(X) | X) - (1-\alpha) | \notag \\
=&\max_\alpha |G_x(F^{-1}(1-\alpha)) - (1-\alpha) | \notag \\
=& \text{KS}(p(v|x),p(v)) \notag \\
\leq & 2 \lim_{n\to \infty} \text{KS}(p(y|x),p_\phi(y|x)) + \text{KS}(p_\phi(v|x),p(v)) \notag \\
= & \epsilon
\end{align}
Hence, for $\forall~\alpha$, the deviation of conditional coverage from $1-\alpha$ is less than $\epsilon$ as $n \to \infty$.

Finally, we show the Nadaraya-Watson conditional density estimator (CDE) \citep{nakayama2011asymptotic}  and the least-squares CDE \citep{sugiyama2010least} as 
 examples of consistent $\hat{p}(y | x)$\footnote{We refer to \citet{bilodeau2023minimax} for the consistency results of a family of conditional density estimators bounded by the empirical Hellinger entropy.}. 
Consider a kernel density estimation of $p(x) = \frac1n\sum_i K_h(||x-x_i||_2)$. The Nadaraya-Watson estimator is constructed as a density ratio 
\begin{align}
\hat{p}(y | x) = \frac{\hat{p}(y, x)}{\hat{p}(x)}  = \frac{\sum_i K_{h_2}(||x-x_i||_2)K_{h_1}(y-y_i)}{\sum_i K_{h_2}(||x-x_i||_2)}. 
\end{align}
Asymptotic approximations show that if $h_1 \to 0$, $h_2 \to 0$, and $nh_1h_2 \to \infty$, then $\hat{p}(y | x) \to {p}(y | x)$ for $\forall$ $y$ as $n \to \infty$ \citep{holmes2007fast}. For example, a typical choice of $h_1, h_2 \sim \mathcal{O}(n^{-1/3})$ gives $|\hat{p}(y | x) - {p}(y | x)| \sim \mathcal{O}(n^{-1/3})$. Therefore, we have
\begin{align}
\lim_{n\to \infty}\text{KS}(p(y|x), \hat{p}(y | x)) =& \lim_{n\to \infty}\max_y |F(y|x) - \hat{F}(y | x)| \notag \\
\leq &   \max_y \int_{-\infty}^y \lim_{n\to \infty}|p(y'|x) - \hat{p}(y' | x)| dy' \notag \\
=& 0
\end{align}

The least-square CDE is defined as 
\begin{align*}
    \hat{p}(y|x) = \argmin_{\hat{r}} \frac{1}{2} \iint (\hat{r}(x,y) - r(x,y))^2p(x)dxdy
\end{align*}
The consistency of $\hat{r}$ is given in Theorem 1 of \citet{sugiyama2010least}. 
\end{proof}

\section{Baseline}
\label{app:baselines}

Here we describe each baseline in detail. T

\noindent \textbf{Conformal Prediction (CP):} The Conformal Prediction (CP) is the typical split conformal prediction. It randomly splits data into a training set and a validation set. A regression function $\hat{y} = f_\theta(x)$ is fit on the training data, and the fitted function is used to compute the nonconformity score on the validation set. The complete algorithm is shown in \Cref{alg:cp}.

\noindent \textbf{Orthogonal Quantile Regression (OQR):} OQR is based on the observation that a necessary condition for conditional coverage is the independence between the coverage identifier $V=\mathbf{1}[Y \in \hat{C}(X)]$ and the set size $L=\hat{C}(X)$. It proposes an orthogonal regularization based on Pearson correlation as 
$R(V,L) = \frac{\text{Cov}(V,L)}{\sqrt{\text{Var}(V)\text{Var}(L)}}$,
or using Hilbert-Schmidt independence criterion (HSIC) to account for nonlinear dependence. The objective function of OQR is an additive combination of quantile regression loss and the orthogonal regularization. 

\noindent \textbf{Generative model (MDN):} We use mixture density network (MDN) for our toy example. MDN models the data likelihood as 
\begin{align}
p(y|x) = \sum_{i=1}^M \pi_i \phi(y;x, \theta_i).
\end{align}
$\pi_i$ are the mixture component weights, and $\phi(y;x, \theta) = \mathcal{N}(\mu_{\theta_i}(x), \text{diag}(\sigma^2_{\theta_i}(x)))$.

\noindent \textbf{Generative model (CVAE):} We use Conditional Variational AutoEncoder (CVAE) in the real-world data experiments. For condition $c$, latent $z$, and target $x$, CVAE optimizes the variational lower bound 

\begin{align}
   \mathbb{E}[\log p(x|c,z)]  - \text{KL}(q(z|x,c)\|p(z|c)). 
\end{align}

\begin{algorithm}
\caption{Conformal Prediction}\label{alg:cp}
\begin{algorithmic}
\Require $D_\text{train} = \{X_i,Y_i\}_{i=1}^{n_1}$, $D_\text{calib} = \{X_i,Y_i\}_{i=n_1+1}^{n_1 + n_2}$, $n_s$, function class $f_\theta(\cdot)$, non-conformity score function $V(\cdot,\cdot)$.  
%\Ensure $y = x^n$
\State Train function $f_\theta(X)$ on $D_\text{train}$ by minimizing MSE. 
\State Get non-conformity score $V$ on $D_\text{calib}$ using $f_\theta(X)$. 
\State Get conformal intervals $C_\alpha(X)$ using $D_\text{calib}$ by \Cref{eqn:predict_set}.
\end{algorithmic}
\end{algorithm}

\section{Data Statistics}
\label{app:datastat}

The data statistics of UCI datasets is included in \Cref{tab:ucidatastat}. 

\noindent \textbf{Bike:} The bike sharing dataset contains hourly and daily count of rental bikes between years 2011 and 2012 in Capital bikeshare system with the corresponding weather and seasonal information. We use the daily count of rental bikes as the predictive target.

\noindent \textbf{Communities:} The communities and crime dataset  contains the demographic feature of a community and its corresponding per capita crime rate.

\noindent \textbf{Parkinsons:} The Parkinsons dataset contains biomedical voice measurements from 42 people with early-stage Parkinson's disease recruited to a six-month trial of a telemonitoring device for remote symptom progression monitoring. The dataset contains 20 biomedical voice measurements from each patient.

\noindent \textbf{MEPS19, MEPS20, MEPS21:} The MEPS datasets contain the medical expenditure panel survey data from 2019 to 2021. The datasets contains features like demographics, marriage status, health status, and health insurance status.  
The predictive target is the total medical expenditure.

\begin{table}[h]
\begin{tabular}{lcc}
Dataset     & Number of Samples & Number of Features \\ \toprule 
Bike        & 10886                      & 18                          \\ \hline 
Communities & 1994                       & 99                          \\\hline 
Parkinsons  & 5875                       & 20                          \\\hline 
MEPS\_19    & 10428                      & 139                         \\\hline 
MEPS\_20    & 17541                      & 139                         \\\hline 
MEPS\_21    & 15656                      & 139                        \\\bottomrule 
\end{tabular}
\caption{Data Statistics of UCI Datasets}
\label{tab:ucidatastat}
\end{table}

\section{Additional Results on Synthetic Data}
\label{app:add}
\Cref{fig:syn2_vis} shows the coverage results for the  Setting II of synthetic data. Most methods can achieve nearly perfect conditional coverage in this setting except non-conformal methods like MDN and OQR.  Like most methods, KS-CP can achieve near-perfect conditional coverage in this case while CP can achieve the perfect conditional coverage in this case by design. 

Compared to traditional conformal prediction method, KS-based method needs to estimate the conditional density model. When the conditional density model is not perfect (as in Fig. E2e), the regularization term of KS-based method is an upper bound for the true conditional coverage. We also like to note while the generative model has a relatively poor conditional coverage, KS-method still achieves near-perfect conditional coverage in the case.

\section{Additional Results on Ablation Studies}
\label{app:ablation}

We show the results for ablation studies for different choices of $\lambda$ for synthetic data setup II in \Cref{fig:ablation2}.
Since in this setting, the conditional coverage for each method is already close to the true specified nominal coverage, the conditional coverage is not affected much as $\lambda$ increases. 
Similarly, the set size and MSE are also not affected much as $\lambda$ increases, which is desired in this setting. 

\begin{figure}[!htbp]
\centering
\begin{subfigure}{.45\textwidth}
  \centering
  \includegraphics[width=\linewidth]{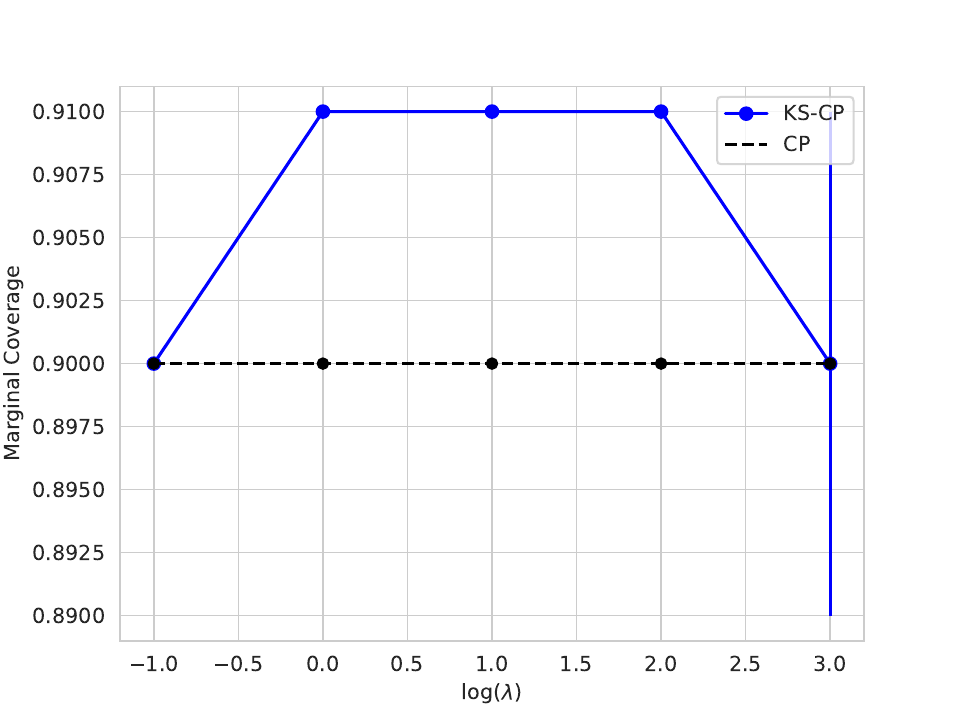}
  \caption{MC}
  \label{fig:ab_mc2}
\end{subfigure}%
\begin{subfigure}{.45\textwidth}
  \centering
  \includegraphics[width=\linewidth]{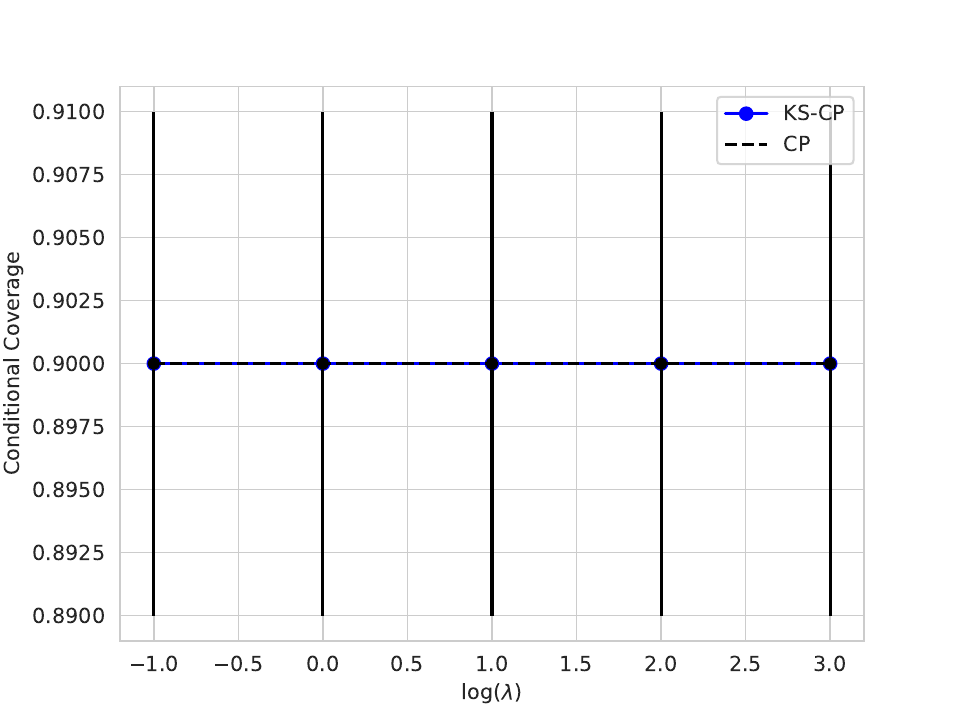}
  \caption{CC}
  \label{fig:ab_cc2}
\end{subfigure} \\
\begin{subfigure}{.45\textwidth}
  \centering
  \includegraphics[width=\linewidth]{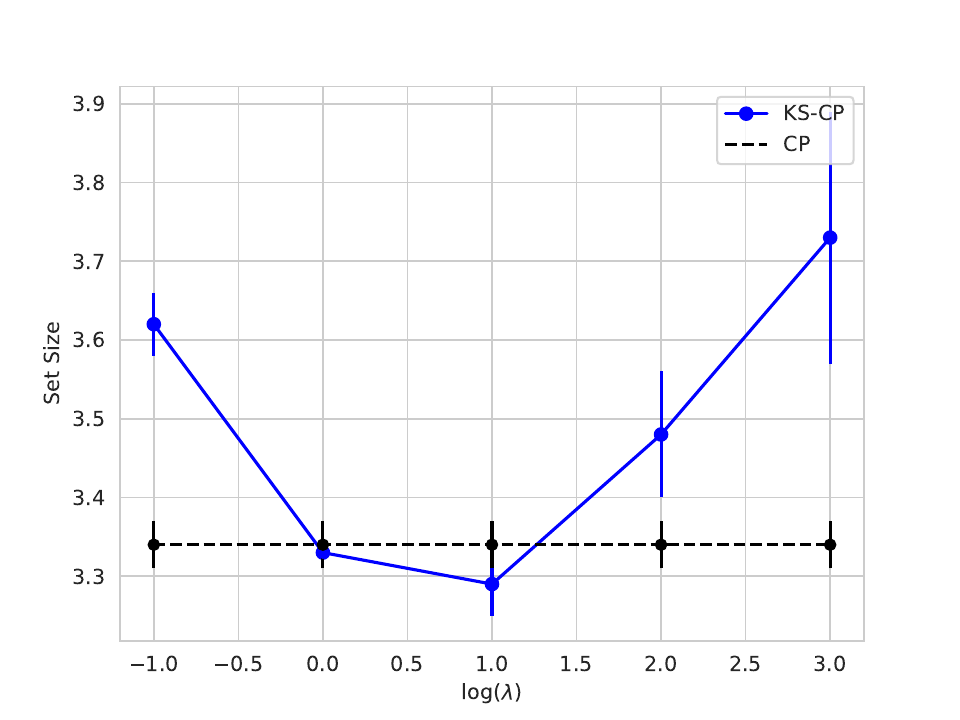}
  \caption{Set Size}
  \label{fig:ab_ss2}
\end{subfigure}%
\begin{subfigure}{.45\textwidth}
  \centering
  \includegraphics[width=\linewidth]{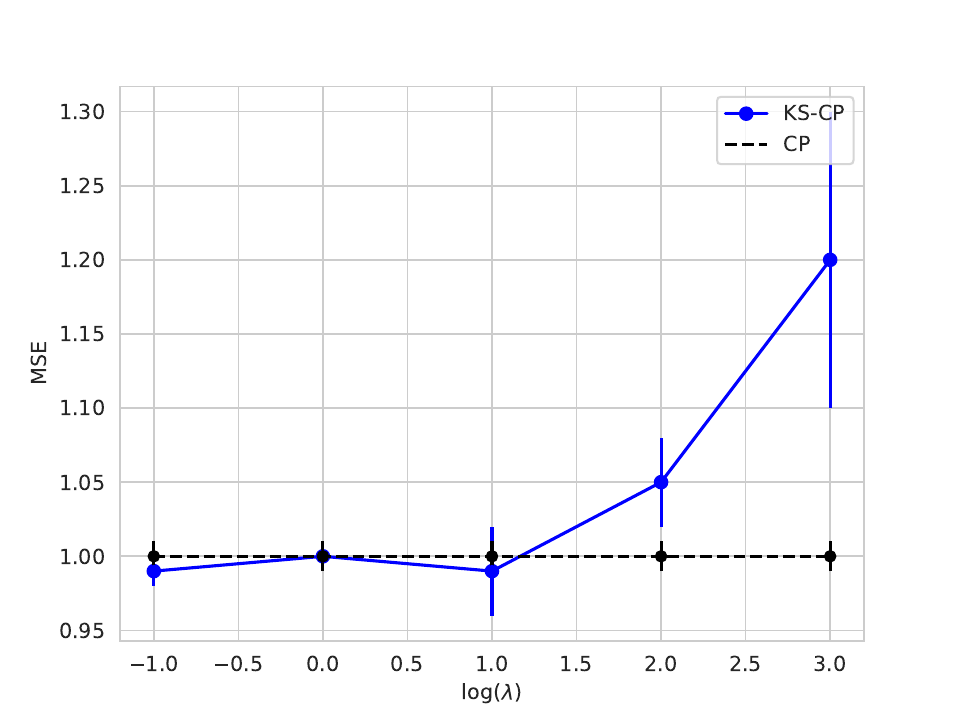}
  \caption{MSE}
  \label{fig:ab_mse2}
\end{subfigure}
\caption{Ablation Studies for Different Choices of $\lambda$ for Synthetic Data Setup II. We report the Marginal Coverage (MC), Conditional Coverage (CC), Set Size, and MSE for $\log(\lambda) = -1,0,1,2,3$ when $1-\alpha=90\%$.}
\label{fig:ablation2}
\end{figure}

\begin{figure}
\centering
\begin{subfigure}{.47\textwidth}
  \centering
  \includegraphics[width=\linewidth]{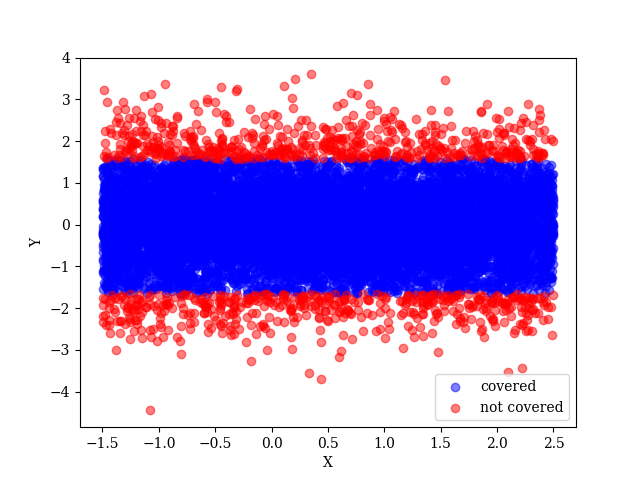}
  \caption{CP: 89.0\%}
  \label{fig:sub21}
\end{subfigure}%
\begin{subfigure}{.47\textwidth}
  \centering
  \includegraphics[width=\linewidth]{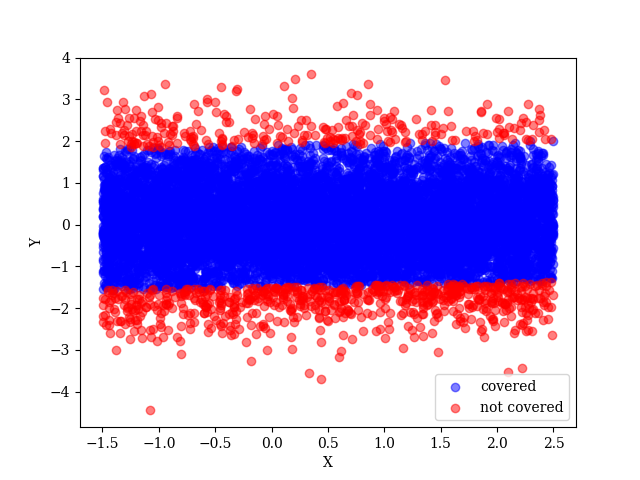}
  \caption{KS-CP: 89.0\%}
  \label{fig:sub22}
\end{subfigure} \\
\begin{subfigure}{.47\textwidth}
  \centering
  \includegraphics[width=\linewidth]{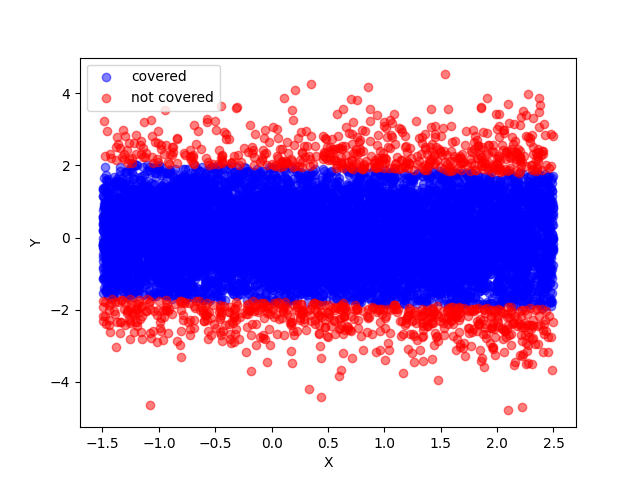}
  \caption{OQR: 93.0\%}
  \label{fig:sub23}    
\end{subfigure}
\begin{subfigure}{.47\textwidth}
  \centering
  \includegraphics[width=\linewidth]{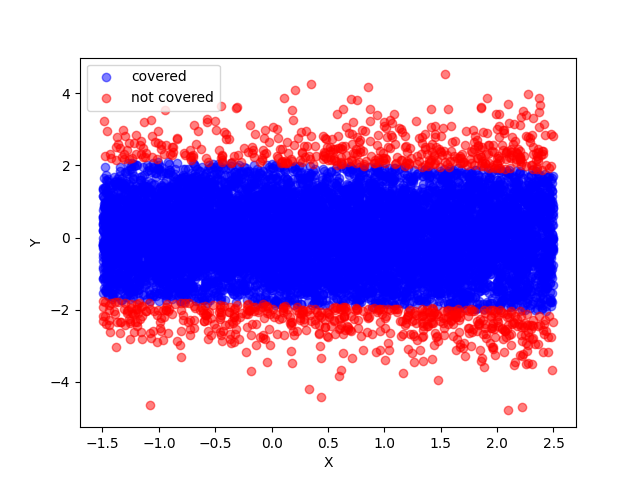}
  \caption{COQR: 94.0\%}
  \label{fig:sub24} 
\end{subfigure}\\
\begin{subfigure}{.47\textwidth}
  \centering
  \includegraphics[width=\linewidth]{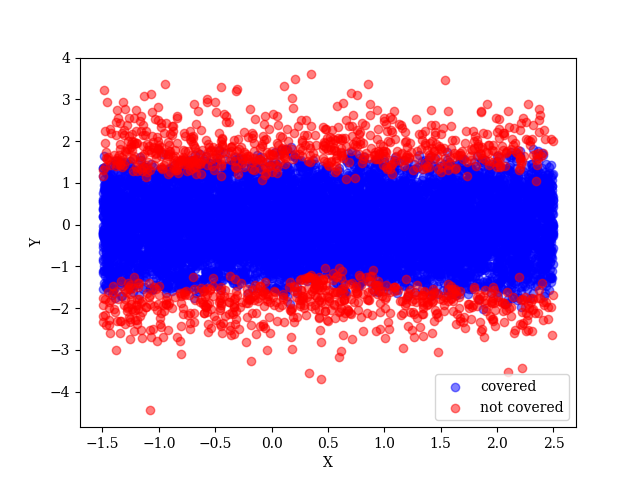}
  \caption{MDN: 68.0\%}
  \label{fig:sub25} 
\end{subfigure}
\begin{subfigure}{.47\textwidth}
  \centering
  \includegraphics[width=\linewidth]{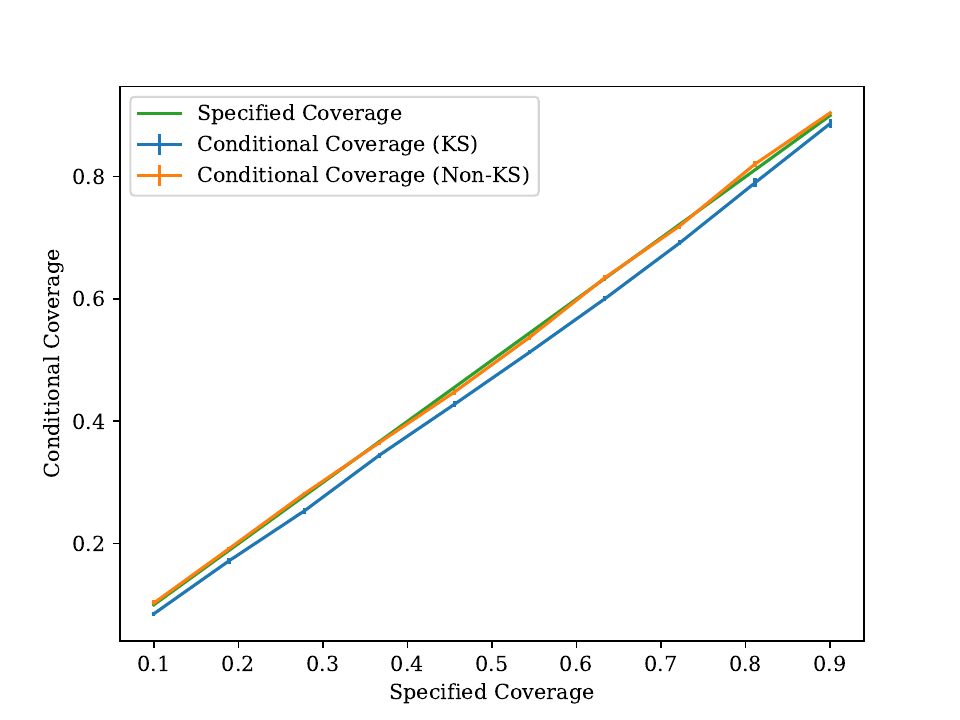}
  \caption{Conditional Coverage for all $\alpha$}
  \label{fig:sub26}
\end{subfigure}
\caption{Coverage under Synthetic Data (Setting II) with Linear Regression, $1-\alpha=90\%$. Here we show the conditional coverage for each method. Like most methods, KS-CP can achieve perfect conditional coverage in this case.}
\label{fig:syn2_vis} 
\end{figure}

\section{Discussion on Hyperparameter Selection}
\label{app:discussion}

\subsection{Selection of $\lambda$}
The parameter is used to control the tradeoff between prediction accuracy and the conditional coverage. The selection of $\lambda$ is highly application-dependent and relies on the requester's need on the robustness of the confidence interval. For example, in the application of drug discovery, a good conditional coverage may be preferred over prediction accuracy, then the requester can set a relatively large $\lambda$. In practice, the requester can set a maximum tolerance level of the predictive accuracy (e.g., the MSE) and tune the $\lambda$ such that the conditional coverage can be maximized within the specified tolerance.

When $\lambda\rightarrow\infty$, the objective is equivalent to only minimize the regularization term. However, the convergence of the predictive set heavily depends on many factors such as the underlying data distribution and the non-conformity scores. In some cases, there may be infinitely many equally good solutions (in terms of the regularization term) with different predictive sets. 

To see why it is the case, we can take a look at our synthetic data setup II. In this toy example, any function $y\equiv c, c \in\mathbb{R}$ can minimize the regularization term $KS(P(V),P_\phi(V|X))$ when $P_\phi$ is perfect. All these solutions will have a perfect conditional coverage, which is the purpose of the regularization. 

However, if $c$ is far away from $0$, the predictive sets will be very wide. In fact, with different values of $c$, the sizes of the predict sets are generally different. Therefore, the final predictive set outputted will be highly dependent on the initial model parameters since there are many global optimums. However, all of these solutions will have similar conditional coverage when $\lambda\rightarrow\infty$ while we cannot control the set size since there is no penalty for wide confidence intervals. Future work can also consider to penalize the interval length in the objective function. 
In practice, the MSE term can be used to control the predictive accuracy of the function, and therefore control the set size of the predictive sets. 

\subsection{Selection of $\gamma$}

For the choice of $\gamma$, intuitively the $\gamma$ should not be too large or too small. If $\gamma$ is too small, the objective may be far from the indicator function we hope to approximate. If $\gamma$ is too large, the objective will be too similar to the discontinuous indicator function and bring challenges to the optimization problem. We empirically demonstrate it using our synthetic data. 

We plot the worst miscoverage rate across $\alpha$ with $\lambda=100$ under the synthetic data setup I with a $\gamma$ grid of $[1,3,5,7,9,11,20,50,70,100,200]$. The results are shown in \Cref{fig:miscoverage_gamma}. We can see the miscoverage is relatively high with very small and large values of $\gamma$ while the miscoverage rate is low and relatively stable when $\gamma$ is between 3 and 50, which justifies our choice of $\gamma=10$. 

\begin{figure}
    \centering
    \includegraphics[width=0.5\textwidth]{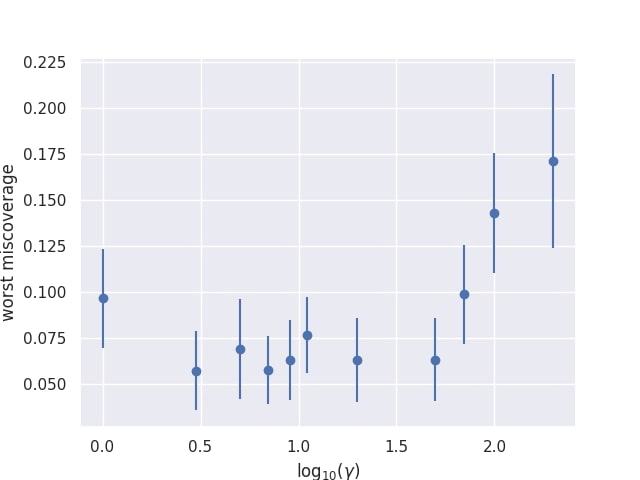}
    \caption{Worst Conditional Coverage vs $\log_{10}(\gamma)$. The conditional miscoverage rate is relatively stable for $\gamma$ between 3 and 50.}
    \label{fig:miscoverage_gamma}
\end{figure}

\section{Discussion on Computation Cost}\label{app:computation}

Compared to the standard conformal prediction method, our method requires learning a separate conditional density model. The computation costs for optimizing the conditional density model can vary heavily depending on the model class. 
For example, the cost of estimating the kernel density estimator is $O(mn)$, where $m$ is CDF grid size and $n$ is the sample size. The cost of training a deep conditional generative model can vary heavily depend on the model architecture and number of parameters. The computation time of estimating the KS distance is also $O(mn)$.

\section{Discussion on Adaptive Set Size Change}\label{app:adaptive}

First, we would like to note that our method can only lead to global set size change for residual and normalized score change. To see why, the set size of the residual non-conformity score is $2q^*$ and the set size of the normalized non-conformity score is $2\sigma(x)q^*$, where $q^*$ is the quantile of the marginal non-conformity score. By changing the function $f$ (our method), we can only influence the set size through $q^*$, which leads to a global change in the set size. 

However, our method can have an adaptive set size change for some non-conformity score such as quantile scores (the predictive set is $[\hat{q}_l(x)-q^*, \hat{q}_h(x)+q^*]$ and the set size is $\hat{q}_h(x)-\hat{q}_l(x)+2q^*$). To see an example, we segment the community dataset to black communities (the fraction of black residents is greater than 50\%) and non-black communities to check the sub-group coverage with the same experimental setup in the paper. The results are shown in \Cref{tab:adaptive}. Here KS-CP changes the set size adaptively and improves subgroup coverage. 

We have added the discussion above in the revised manuscript. 

\begin{table}[tbhp!]
    \centering
    \begin{tabular}{cccc}\toprule
         &  Black Community & Non-Black Community\\ \midrule 
       CP Set Size  &  2.91 & 2.81\\
       CP Coverage & 0.93 & 0.73 \\ \midrule 
       KS-CP Set Size & 2.56 & 2.21\\
       KS-CP Coverage & 0.92 & 0.79\\ \midrule 
       $\Delta$ Set Size & 0.35 & 0.60 \\ \bottomrule
    \end{tabular}
    \caption{Subgroup Set Size and Coverage with Quantile Non-Conformity Score. KS-CP changes the set size adaptively and improves subgroup coverage.}
    \label{tab:adaptive}
\end{table}

\end{appendices}

%%===========================================================================================%%
%% If you are submitting to one of the Nature Portfolio journals, using the eJP submission   %%
%% system, please include the references within the manuscript file itself. You may do this  %%
%% by copying the reference list from your .bbl file, paste it into the main manuscript .tex %%
%% file, and delete the associated \verb+\bibliography+ commands.                            %%
%%===========================================================================================%%
\FloatBarrier

\bibliography{sn-bibliography}% common bib file
%% if required, the content of .bbl file can be included here once bbl is generated
%%\input sn-article.bbl

\end{document}